\pdfoutput=1

\documentclass[11pt]{article}

\usepackage{EMNLP2023}

\usepackage{times}
\usepackage{latexsym}
\usepackage{booktabs}
\usepackage{multirow}
\usepackage{graphicx}
\usepackage{amsmath}
\usepackage{amssymb}
\usepackage{amsthm}
\newtheorem{theorem}{Theorem}
\newtheorem{corollary}[theorem]{Corollary}
\newtheorem{proposition}{Proposition}
\usepackage{xcolor}
\usepackage{colortbl}
\usepackage{tikz}
\usepackage{tcolorbox}
\tcbuselibrary{breakable}
\usepackage{pgfplots}
\pgfplotsset{compat=1.17}
\usepackage{booktabs}
\usepackage{placeins}
\usepackage[T1]{fontenc}
\usepackage[utf8]{inputenc}
\usepackage{microtype}
\usepackage{inconsolata}
\usepackage[inline]{enumitem}
\setlist{nosep,leftmargin=*}
\newcommand{\method}{\textsc{ActionRating}}

\newcommand{\tgray}[1]{\textcolor{gray}{#1}}
\newcommand{\tgreen}[1]{\textcolor{green!50!black}{#1}}
\newcommand{\gain}[1]{\tgreen{\textbf{#1}}}
\title{Knowing When to Ask:\\ Self-Gated Clarification for Hierarchical Language Agents}

\author{%
  Aijing Gao, 
  Yiming Kang, 
  Mengdie Flora Wang, 
  Jae Oh Woo \\[0.5em]
  Amazon Web Services \\
  \texttt{\{gaijing, ymkang, florawan, jaeohwoo\}@amazon.com}
}

\begin{document}
\maketitle

\begin{figure*}[t!]
\centering
\includegraphics[width=\textwidth]{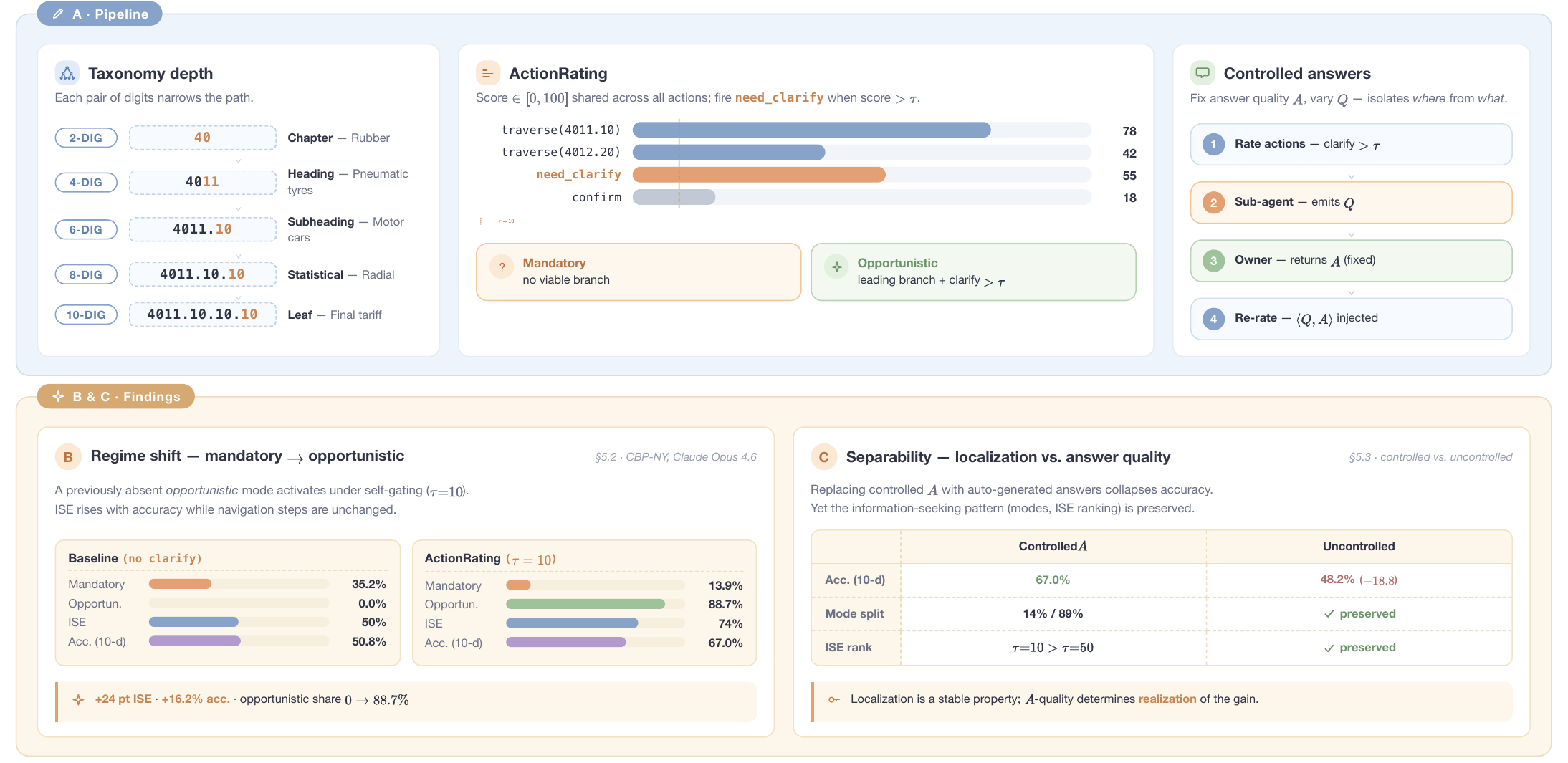}
\caption{Overview of \method{}.
\textbf{(A)~Pipeline.}
The agent rates all candidate actions, including \texttt{need\_clarify}, on a shared $[0,100]$ ordinal scale; clarification fires when its score exceeds threshold~$\tau$.
Two modes emerge: \emph{mandatory} (no viable branch) and \emph{opportunistic} (a leading branch exists but clarify still scores above $\tau$).
A controlled answer channel fixes answer quality to isolate \emph{where} help is sought from \emph{what} is received.
\textbf{(B,~C)~Findings} (\S\ref{sec:main},~\S\ref{sec:separability}; previewed here, not part of the pipeline).
At $\tau{=}10$, opportunistic share $0\to88.7\%$, ISE $50\%\to74\%$, accuracy $50.8\%\to67.0\%$~\textbf{(B)}.
Replacing controlled with auto-generated answers collapses accuracy ($-$18.8\%) yet preserves the mode split and ISE ranking~\textbf{(C)}, supporting an empirical separation between help localization and answer-source quality under controlled degradation.}
\label{fig:hero}
\end{figure*}

\begin{abstract}
In hierarchical reasoning, failures often originate at
intermediate decision points where the agent commits to a wrong
branch without recognizing that it lacks critical information.
Rather than treating clarification as an external uncertainty
trigger, we propose \method{}, a formulation that places it
inside the agent's action space on a shared ordinal scale with
navigation, so that asking competes directly with acting at every
decision point and help-seeking becomes observable at intermediate
states.
Two structurally distinct information-seeking modes emerge from the
agent's own ratings: \emph{mandatory} (no viable branch) and
\emph{opportunistic} (residual uncertainty despite a leading
candidate).
On Harmonized Tariff Schedule classification (30{,}000-node
taxonomy, three benchmarks, 9~LLMs across 4 families), we observe
a regime shift from mandatory to opportunistic clarification, with
Information-Seeking Effectiveness (ISE), a local diagnostic
defined as the fraction of help interactions followed by a correct
next navigation step (not a final-task metric), rising from 50\%
to 74\%.
Three diagnostic contrasts fail to reproduce this structure.
A separability test shows that the information-seeking pattern
(mode split, ISE ranking) persists when answer quality is degraded
($-$18.8\% accuracy), supporting an empirical separation between
\emph{where} an agent seeks help and \emph{the quality} of the help it receives.
Under the controlled answer channel, accuracy gains reach $+$16.2\%
at 10-digit; we read this as an upper bound on what better
localization could unlock, not a deployment estimate.
\end{abstract}

\section{Introduction}
\label{sec:intro}

Language agents that reason over hierarchical structures (medical
codes, legal statutes, product taxonomies) face a recurring
failure mode: once the agent commits to a wrong intermediate
branch, every subsequent step merely elaborates an error that
should have been caught
earlier~\cite{yao2023reactsynergizingreasoningacting,
yao2023treethoughtsdeliberateproblem,
shinn2023reflexionlanguageagentsverbal,
press2023measuring,
dziri2024faith}.
Final-answer accuracy tells us \emph{that} the system failed, but
not \emph{where}, at which decision point the agent lacked the
information to proceed safely.
The core question is deceptively simple: \emph{when should the
agent ask for help instead of committing?}

\paragraph{Why current approaches fall short.}
Existing designs treat clarification as external to the reasoning
trajectory: a confidence threshold~\cite{kadavath2022language}, a
prompt instruction (``ask if unsure''), or sampling-based
disagreement~\cite{kuhn2023semantic}.
These mechanisms decouple the decision to ask from the decision to
act, leaving two problems unsolved.
First, they do not make information-seeking behavior \emph{structurally
observable}: we cannot distinguish an agent that asks because no
viable branch exists from one that asks to reduce residual
uncertainty.
Second, they confound \emph{where} help was sought with
\emph{what} help was received: an agent that asks more may perform
better simply because it gets better information.

\paragraph{Clarification as action.}
We propose \method{}, a formulation that addresses both problems
by placing clarification inside the agent's own action space
(Figure~\ref{fig:hero}).
The agent scores candidate next actions, including a dedicated
clarification action, on a shared $[0, 100]$ ordinal scale, so
that asking competes directly with acting at each decision point.
This shared-scale competition makes the local need for help
observable without any external uncertainty estimator.
Two structurally distinct modes emerge from the agent's own
ratings:
\emph{mandatory} help, where clarification is top-ranked and no
navigation branch is viable, and
\emph{opportunistic} help, where a leading branch exists but a
targeted question can reduce residual uncertainty before
commitment.

\paragraph{Isolating help localization.}
To analyze information-seeking behavior cleanly, we must separate
\emph{where} help was sought from \emph{what} was received.
We pair \method{} with a controlled answer channel that fixes
answer quality, analogous to holding one experimental factor
constant to analyze another.
We also track \emph{Information-Seeking Effectiveness}
(ISE), the fraction of help interactions after which the
agent's next navigation lands on the correct path, as a local
utility probe (\S\ref{sec:ise}).
Mode shift alone shows structural change but not utility;
ISE alone measures local usefulness but not global structure;
accuracy alone is confounded by answer quality.
Together the three provide converging evidence.

\paragraph{Test bed.}
We evaluate on Harmonized Tariff Schedule (HTS) classification, a
language-mediated taxonomy of 30{,}000+ nodes where item
descriptions are free-text, taxonomy headings are natural-language
definitions, and clarification is itself a language-generation act.
HTS provides the structural prerequisites (deep branching,
repeated intermediate commitments, genuine information gaps, and
verifiable ground truth) that make the measurement question
nontrivial (\S\ref{sec:testbed}).

\paragraph{Contributions.}
\textbf{(1)~Framework.}
We formulate clarification as a selectable action competing with
navigation on a shared ordinal scale, yielding a self-gated
mechanism that makes information-seeking behavior directly
observable.
\textbf{(2)~Behavioral analysis.}
The framework reveals a regime shift, not more questions but a
structural transition from mandatory to opportunistic
clarification, with ISE rising from 50\% to 74\%.
Three diagnostic contrasts (prompt-level, sampling-based,
rating-only) do not reproduce this shift.
\textbf{(3)~Separability.}
When answer quality is degraded, accuracy collapses ($-$18.8\%)
while the information-seeking pattern (mode split, ISE ranking)
is preserved, supporting an empirical separation between help
localization and answer-source quality under controlled
degradation.
Accuracy gains under the controlled answer channel
($+$16.2\% at 10-digit) are read as an upper bound on what
better localization could unlock, not a deployment estimate.
Evaluation spans 9~LLMs (4 families), three benchmarks, component
ablation, and threshold sensitivity.

\section{Related Work}
\label{sec:related}

Our work intersects LLM agents for structured
reasoning~\cite{yao2023reactsynergizingreasoningacting,yao2023treethoughtsdeliberateproblem,shinn2023reflexionlanguageagentsverbal,zhou2024lats,schick2024toolformer,liu2023agentbench,sumers2024cognitive},
self-evaluation and
uncertainty~\cite{wang2023selfconsistencyimproveschainthought,madaan2023selfrefineiterativerefinementselffeedback,cobbe2021trainingverifierssolvemath,lightman2024letsverify,kadavath2022language,kuhn2023semantic,lin2022teaching,zheng2023judging},
information-seeking and
clarification~\cite{settles2012active,wang2023interactivenaturallanguageprocessing,Aliannejadi_2019,zamani2020generating,rao2018learning,rahmani2023survey},
selective prediction and
abstention~\cite{geifman2017selective,elyanov2010foundations,kamath2020selective},
hierarchical
classification~\cite{silla2011survey,kowsari2017hdltex,shimura-etal-2018-hft,banerjee2019hierarchical,zhou2020hierarchy,Mao_2019},
and multi-step
reasoning~\cite{wei2023chainofthoughtpromptingelicitsreasoning,zhou2023leasttomost,khot2023decomposed,gao2023pal,nye2021show,zelikman2022star,hao2023reasoning,besta2024graph,huang2024large,dua2022successive}.
A full discussion is in Appendix~\ref{app:related}.
Three distinctions position our contribution.
\emph{First}, existing agent frameworks address general reasoning
over flat or lightly structured spaces; we target deep hierarchical
taxonomies where each step narrows the search space.
\emph{Second}, self-evaluation methods rate final answers or sample
agreement; we rate candidate \emph{actions}, including
clarification, on a shared ordinal scale, so that clarification
competes directly with navigation rather than being triggered by
final-answer confidence or sampling disagreement.
\emph{Third}, prior clarification work assumes external uncertainty
estimators or human interlocutors; our mechanism is entirely
\emph{self-gated} from the agent's own action ratings.
\section{Framework}
\label{sec:framework}
\subsection{Hierarchical Navigation as MDP}
\label{sec:mdp}

We model hierarchical reasoning as an episodic Markov Decision
Process~\cite{puterman1994markov}
$\mathcal{M} = (\mathcal{S}, \mathcal{A}, \mathcal{T},
\mathcal{R})$.
\textbf{States} are taxonomy nodes augmented with the item
description and navigation history.
\textbf{Actions} comprise five types:
\texttt{traverse\_child}, \texttt{backtrack},
\texttt{need\_clarify}$(q)$, \texttt{jump}$(c)$, and
\texttt{confirm}.
\textbf{Transitions} induced by a selected taxonomy action are
deterministic (the navigation environment is a fixed tree);
stochasticity enters through the LLM policy $\pi(a \mid s)$ and
through the answer-generation channel invoked by
\texttt{need\_clarify}.
\textbf{Rewards} assign $+1$/$-1$ for correct/incorrect
classification.

\subsection{\method{}: Asking as a Selectable Action}
\label{sec:actionrating}

The core idea is to make help-needed states observable by placing
clarification inside the agent's own action space rather than
treating it as an external decision (a worked example of the
self-gated reentry cycle is in Figure~\ref{fig:framework},
Appendix~\ref{app:proofs}).
\method{} asks the agent to rate its top-$K$ candidate
actions, including a dedicated \texttt{need\_clarify}
action, on a $[0, 100]$ ordinal relevance scale before
committing (full rating prompt in Appendix~\ref{app:rating_prompt}).
At step $t$, the agent produces:
\[
  \{(a_i, s_i, r_i)\}_{i=1}^{K}, \quad a^* = \arg\max_i s_i
\]
where $a_i$ is the $i$-th candidate action, $s_i \in [0, 100]$ its
ordinal score, $r_i$ a one-sentence rationale, and $a^*$ the
selected action.
The action-rating step itself is implemented within a single navigation
call per step.
However, when clarification is triggered, the full system incurs
additional sub-agent and reentry calls at that step
(see the accuracy--cost analysis in \S\ref{sec:discussion}).

The rating serves two functions: (1) it makes help-needed states
directly observable via the ask-vs-act competition, producing the
mandatory/opportunistic distinction that is our primary analytical
object; and (2) it forces deliberative comparison between
candidates before committing.
In our experiments (Appendix~\ref{app:ablation_detail}), the behavioral change
comes primarily from self-gated help-seeking rather than from
rating alone, indicating that the rating's main value lies in
enabling observation and gating of help-needed states.

\paragraph{Controlled answer channel.}
To isolate help localization from answer quality, we use a
controlled answer channel, analogous to holding one experimental
factor fixed while analyzing another.
Two paired conditions complete the design:
(1)~a \textbf{controlled condition} that fixes answer quality high,
so behavioral differences primarily reflect help localization rather than variation in answer quality; and
(2)~a \textbf{degraded condition} that removes privileged access as
a \emph{separability test}: information-seeking patterns surviving
while accuracy collapses provides evidence for an empirical
separation between help localization and answer-source quality
(\S\ref{sec:separability}).
The controlled channel simulates a knowledgeable product owner
who can provide authoritative attribute facts (material composition,
intended use, manufacturing method) while explicit classification
codes are masked (see Appendix~\ref{app:clarify_prompt}).
A post-hoc audit confirms that 96\% of answers contain only
domain or technical-specification knowledge (\S\ref{sec:separability}).
Accuracy numbers are therefore upper bounds, not deployment
estimates.

\subsection{Self-Gated Information Seeking}
\label{sec:reentry}

A key property of the rating is that \texttt{need\_clarify} competes
directly with navigation actions on the same scale.
When \texttt{need\_clarify} appears among the top-$K$ with score
$\geq \tau$ (\emph{clarification threshold}), the agent invokes a
clarification sub-agent \emph{at the current node}:

The procedure has four stages:
\textbf{(1)~Detect}: identify $\exists\, i \leq K$ such that
$a_i = \texttt{need\_clarify} \wedge s_i \geq \tau$;
\textbf{(2)~Clarify}: invoke sub-agent
$\hat{a} = \text{ClarifyAgent}(q_i, \text{item})$;
\textbf{(3)~Inject}: add the answer to observation $o_t$;
\textbf{(4)~Re-select}: run action selection again with the
enriched observation (\emph{reentry}).

This \emph{self-gated reentry} requires no changes to the outer
navigation loop: clarification is absorbed within the step.
The threshold $\tau$ controls aggressiveness: lower values trigger
more clarifications.
We analyze sensitivity in \S\ref{sec:threshold}.
\paragraph{Two help-needed modes.}
We define two modes purely from the rank of
\texttt{need\_clarify} within the top-$K$ list.
\emph{Mandatory help}: \texttt{need\_clarify} is the
top-ranked action ($\text{rank}=1$); no navigation branch scores
above it.
\emph{Opportunistic help}: a navigation action leads the
ranking, but \texttt{need\_clarify} appears among positions
2--$K$ with score $\geq \tau$.

The definition is operational: it depends only on the rank, not on
any interpretation of why the agent ranked it there.
Empirically, rank-1 placement tends to correspond to states where
no navigation branch appears viable, while lower-ranked placement
corresponds to states with a preferred branch but residual
uncertainty, consistent with the classical notion of value of
information~\cite{howard1966information,raiffa1961applied}.
Both modes are self-triggered from the same ordinal action-rating
signal, requiring no external uncertainty estimator.
Together they form an observational taxonomy, not a classification
of true epistemic need, but a structured partition that produces a
consistent behavioral signature (mode shift, ISE improvement,
separability) absent under simpler triggers
(\S\ref{sec:experiments}).

Structural properties (monotone trigger sets, bounded reentry)
and a threshold-policy sanity check stating a sufficient
single-crossing condition under which a threshold policy is
optimal are deferred to Appendix~\ref{app:proofs}.
We treat this as a conceptual sanity check, not a guarantee for
LLM-emitted scores, which are not assumed to be calibrated.

\section{Experiments}
\label{sec:experiments}

\subsection{HTS as a Language-Mediated Testbed}
\label{sec:testbed}

HTS classification is a \emph{language-mediated} hierarchical
reasoning task: item descriptions are free-text and inherently
ambiguous (e.g., ``cough drops'' could be medicament or
confectionery), taxonomy nodes are defined by natural-language
headings with legal-text qualifications, and clarification is
itself a language-generation act; the agent must formulate a
discriminative question in natural language and interpret a textual
answer.
The entire reasoning chain, from item description through
intermediate decisions to clarification, operates in language space.

We require four structural preconditions for the measurement
question to be nontrivial:
(i)~\emph{deep branching} so that early errors compound,
(ii)~\emph{repeated intermediate commitments} so that
information-seeking varies across steps,
(iii)~\emph{information gaps} severe enough that targeted
clarification carries real value,
and (iv)~\emph{verifiable ground truth}.
Flat classification fails (i)--(ii); shallow QA benchmarks
fail (i); open-ended reasoning fails (iv).
HTS, a hierarchical taxonomy used in international trade to
assign 10-digit codes to imported goods, satisfies all four:
30{,}000+ nodes across 5 levels (branching factors up to 50+),
General Rules of Interpretation requiring special-case
protocols~(Appendix~\ref{app:mdp_validation}),
systematically incomplete product descriptions, and
verified ground-truth codes from U.S.\ Customs
rulings~\cite{cbp_cross}.
We construct a knowledge
graph~\cite{hts_data,pan2024unifying} as the MDP
environment (Appendix~\ref{app:kg}).
We separate \textbf{Layer~1} (domain
instantiation: KG, GRI protocols, answer channel; HTS-specific,
must be re-implemented per domain) from \textbf{Layer~2}
(measurement protocol: action-space formulation,
mandatory/opportunistic analysis, ISE, threshold
sweep; portable to any tree-structured reasoning task).

\paragraph{Datasets and metrics.}
We evaluate on three benchmarks:
\textbf{CBP-NY} (1{,}181 products extracted from public CBP rulings~\citep{cbp_cross} via LLM-based pipeline (Appendix~\ref{app:extraction_prompt}),
\textbf{ATLAS} (200 samples~\cite{yuvraj2025atlasbenchmarkingadaptingllms}),
and \textbf{HSCodeComp} (632 expert-annotated
records~\cite{yang2025hscodecomprealisticexpertlevelbenchmark}).
We report hierarchical accuracy at each depth level (2--10 digits),
success rate, average navigation steps, and ISE (\S\ref{sec:ise}).

\subsection{Setup}
\label{sec:baselines}

We evaluate 9~LLMs across 4 families as MDP navigators; \method{}
($\tau{=}10$) is applied to 5 of them.
The \textbf{baseline} uses greedy action selection without scoring
or gating (Appendix~\ref{app:nav_prompt}).
The controlled answer channel uses CBP ruling
attributes with codes masked
(Appendix~\ref{app:clarify_prompt},~\ref{app:oracle_ablation}).
Two \emph{diagnostic contrasts} test alternative explanations:
\textbf{H1}: a prompt-level instruction suffices
(CoT-Ask-if-Unsure; Appendix~\ref{app:cot});
\textbf{H2}: a sampling-based trigger suffices
(Self-Consistency, $N{=}3$;~\cite{wang2023selfconsistencyimproveschainthought}; Appendix~\ref{app:sc}).
\textbf{H3} (deliberation without actioning) is tested by the
rating-only ablation ($\tau{=}101$).

\section{Results}
\label{sec:results}

\subsection{Information-Seeking Behavior under \method{}}
\label{sec:main}

\begin{table*}[t]
\centering
\caption{
  Hierarchical accuracy (\%) across models and confidence-triggering methods on HTS
  classification ($N{=}1{,}181$).
  Cell shading: \colorbox{green!25}{high} to \colorbox{red!10}{low}.
  \textit{CoT-Ask-if-Unsure}: single prompt instruction to ask clarification when uncertain.
  \textit{Self-Consistency} ($N{=}3$): trigger clarification on action disagreement
  ($\dagger$ ${\approx}19$ LLM calls/record).
  \textbf{AR ($\tau{=}10$)}: Claude Opus 4.6 with \method{}.
  Significance markers on $\Delta$ rows are based on non-parametric paired bootstrap
  ($n_{\mathrm{boot}}{=}5{,}000$):
  $^{***}$ 95\,\% CI strictly positive;
  $^{\mathrm{ns}}$ CI includes zero.
}
\label{tab:main}
\small
\resizebox{\textwidth}{!}{%
\begin{tabular}{llccccccc}
\toprule
\textbf{Model} & \textbf{\# Params} & \textbf{Succ.\,(\%)} & \textbf{2-digit} & \textbf{4-digit} & \textbf{6-digit} & \textbf{8-digit} & \textbf{10-digit} & \textbf{Avg Steps} \\
\midrule
\multicolumn{9}{l}{\textit{Closed-Source Models (Baseline MDP)}} \\
\midrule
    Claude Opus 4.6     & --- & 97.3 & \cellcolor{green!25}79.3 & \cellcolor{green!25}70.9 & \cellcolor{green!15}61.8 & \cellcolor{yellow!20}54.4 & \cellcolor{yellow!20}50.8 & 5.6 \\
    Claude Sonnet 4.5   & --- & 95.1 & \cellcolor{green!25}77.6 & \cellcolor{green!15}67.2 & \cellcolor{yellow!20}56.4 & \cellcolor{yellow!20}50.5 & \cellcolor{orange!15}47.2 & 6.7 \\
    Claude Haiku 4.5    & --- & 96.5 & \cellcolor{green!25}72.2 & \cellcolor{green!15}61.3 & \cellcolor{orange!15}49.8 & \cellcolor{orange!15}42.5 & \cellcolor{red!10}37.8 & 6.5 \\
\midrule
\multicolumn{9}{l}{\textit{Open-Source Models (Baseline MDP)}} \\
\midrule
    Kimi K2             & 1T   & 97.6 & \cellcolor{green!25}76.8 & \cellcolor{green!15}67.6 & \cellcolor{yellow!20}56.8 & \cellcolor{orange!15}48.6 & \cellcolor{orange!15}44.1 & 5.7 \\
    DeepSeek V3         & 671B & 90.0 & \cellcolor{green!25}75.5 & \cellcolor{green!15}65.2 & \cellcolor{yellow!20}53.7 & \cellcolor{orange!15}45.9 & \cellcolor{orange!15}41.4 & 6.2 \\
    Mistral Large 3     & 123B & 96.7 & \cellcolor{green!25}72.2 & \cellcolor{green!15}60.8 & \cellcolor{orange!15}49.9 & \cellcolor{orange!15}42.3 & \cellcolor{red!10}38.8 & 5.9 \\
    GPT-OSS 120B        & 120B & 96.5 & \cellcolor{green!25}72.9 & \cellcolor{green!15}61.3 & \cellcolor{orange!15}49.9 & \cellcolor{orange!15}41.3 & \cellcolor{red!10}36.9 & 6.0 \\
    Minimax M2          & 230B & 96.8 & \cellcolor{green!15}67.6 & \cellcolor{yellow!20}55.5 & \cellcolor{orange!15}45.5 & \cellcolor{red!10}38.8 & \cellcolor{red!10}35.4 & 6.1 \\
    Qwen3 235B          & 235B & 93.6 & \cellcolor{green!15}65.9 & \cellcolor{yellow!20}54.7 & \cellcolor{orange!15}44.2 & \cellcolor{red!10}36.6 & \cellcolor{red!10}32.9 & 6.9 \\
\midrule
\multicolumn{9}{l}{\textit{Simpler Confidence-Triggering Alternatives (Claude Opus 4.6)}} \\
\midrule
    + CoT-Ask-if-Unsure                    & --- & 97.5 & \cellcolor{green!35}80.6 & \cellcolor{green!25}71.5 & \cellcolor{green!15}63.0 & \cellcolor{yellow!20}55.6 & \cellcolor{yellow!20}52.0 & 5.5 \\
    + Self-Consistency ($N{=}3$)$^\dagger$ & --- & 96.4 & \cellcolor{green!35}85.3 & \cellcolor{green!25}77.4 & \cellcolor{green!15}68.9 & \cellcolor{green!15}62.9 & \cellcolor{yellow!20}59.5 & 6.1 \\
\midrule
    Claude Opus 4.6 + \method{} ($\tau{=}10$) & --- & 97.8 & \cellcolor{green!35}\textbf{87.2} & \cellcolor{green!35}\textbf{82.0} & \cellcolor{green!25}\textbf{75.2} & \cellcolor{green!15}\textbf{69.5} & \cellcolor{green!15}\textbf{67.0} & 5.4 \\
    $\Delta$ vs.\ best baseline\textsuperscript{***} & & & \textbf{+7.9} & \textbf{+11.1} & \textbf{+13.4} & \textbf{+15.1} & \textbf{+16.2} & \\
    $\Delta$ vs.\ Self-Consistency & & & $+$\textbf{1.9}$^{\mathrm{ns}}$ & $+$\textbf{4.6}\textsuperscript{***} & $+$\textbf{6.3}\textsuperscript{***} & $+$\textbf{6.6}\textsuperscript{***} & $+$\textbf{7.5}\textsuperscript{***} & \\
\bottomrule
\end{tabular}%
}
\end{table*}

Table~\ref{tab:main} presents results across all 9
baseline models, two diagnostic contrasts
(CoT-Ask-if-Unsure and Self-Consistency, $N{=}3$),
and Claude Opus 4.6 with \method{}.
The primary signals are \emph{what changes in
information-seeking behavior}, not the accuracy numbers themselves
(which are upper bounds under the controlled answer channel).

\textbf{A distinctive clarification pattern emerges.}
\method{} ($\tau{=}10$) produces a three-part behavioral signature
absent from all comparison conditions:
(i)~a regime shift from mandatory to opportunistic
clarification
(35.2\%$\to$13.9\% mandatory; 0\%$\to$88.7\% opportunistic),
(ii)~rising local utility (ISE: 50\%$\to$74\%), and
(iii)~accuracy co-movement at every hierarchy depth.
This is a structural change in \emph{where} the agent seeks
clarification, not merely \emph{how often}; the volume increase
is a consequence of opportunistic mode activation, not its cause
(\S\ref{sec:ise}--\ref{sec:separability}).

\textbf{Gating, not scoring, drives the regime shift.}
Rating-only ($\tau{=}101$; Appendix~\ref{app:ablation_detail})
retains the full scoring apparatus but disables the gating
threshold so that no clarification ever fires.
Despite having the deliberation step, it yields $-$0.9\% and
produces no opportunistic events, directly isolating
\emph{gating} as the mechanism behind the regime shift.
CoT-Ask-if-Unsure (H1) and Self-Consistency (H2) similarly
fail to reproduce the three-part signature, arguing against
prompt-level instruction and sampling disagreement as
sufficient triggers.
Among the tested alternatives, only shared-scale action
competition, where asking competes with acting at the same local
decision point, produces the observed structure.

\textbf{Upper-bound accuracy under controlled answers.}
Under the controlled answer channel, Claude Opus 4.6 reaches 67.0\%
10-digit accuracy ($+$16.2\% over baseline 50.8\%), with gains
increasing monotonically with depth, consistent with the claim
that deeper hierarchy levels benefit most from better
help localization.
These numbers bound what better localization
\emph{could} unlock when high-quality answers are available.
Cost rises from 6.0 to 10.4 calls/record
(Figure~\ref{fig:acc_vs_cost}).
The behavioral signature generalizes across 4~LLM families
(Table~\ref{tab:multimodel},
Appendix~\ref{app:multimodel_detail}) and two additional
benchmarks (ATLAS and HSCodeComp) with $\tau{=}10$ locked
from CBP-NY, providing an out-of-sample generalization test
(Table~\ref{tab:extbenchmarkmodel},
Appendix~\ref{app:crossbench_detail}).

\subsection{Information-Seeking Effectiveness (ISE)}
\label{sec:ise}

A central question is whether self-gating merely increases the
\emph{volume} of help-seeking or also improves its local
usefulness; i.e., whether help was requested at states that
genuinely needed it.
We define Information-Seeking Effectiveness (ISE) as a
\emph{localized help-utility probe}: the fraction of help
interactions after which the agent's next navigation action
lands on the correct path:
\[
  \text{ISE} = \frac{\text{\# QA followed by correct traverse}}
                     {\text{\# total QA interactions}}
\]
ISE tests whether each identified help interaction was
\emph{locally useful}: if help was requested and the next action
was correct, the interaction was productive at the one-step level.
Because the controlled answer channel fixes answer quality, ISE
variation across conditions reflects differences in \emph{where}
the agent sought help, not \emph{what} it received.

\textbf{Self-gating improves quality, not just volume.}
At $\tau{=}10$, the agent issues 6$\times$ more clarifications
(2.4 vs.\ 0.4/record), yet ISE rises from 50\% to 74\%
(Figure~\ref{fig:ise}a).
Virtually all additional volume is opportunistic, and both modes
attain $\approx$75\% ISE (Figure~\ref{fig:ise}b).
At $\tau{=}1$, volume increases further (5.9~QA/record) but ISE
\emph{drops} to 62\%: the additional questions are less likely to
land on the correct path.
This dissociation between volume and quality is the clearest
evidence that the framework distinguishes \emph{asking at the right
place} from \emph{asking more often}.

\textbf{Opportunistic clarification corrects misaligned trajectories.}
Table~\ref{tab:ise_counterfactual} further dissects opportunistic ISE
by the agent's traversal state at clarification time.
Even when the agent was already on the correct HTS path, ISE is
83.6\%, suggesting that on-path clarifications refine rather than
derail correct trajectories.
More tellingly, when the agent was \emph{off} the correct path, where
clarification must actively steer the agent back, ISE remains 67.3\%.
This argues against the confound that high aggregate ISE is driven
purely by easy, already-correct cases: the clarification sub-agent
appears to meaningfully correct misaligned trajectories.

\begin{figure*}[t]
\centering
\includegraphics[width=0.82\textwidth]{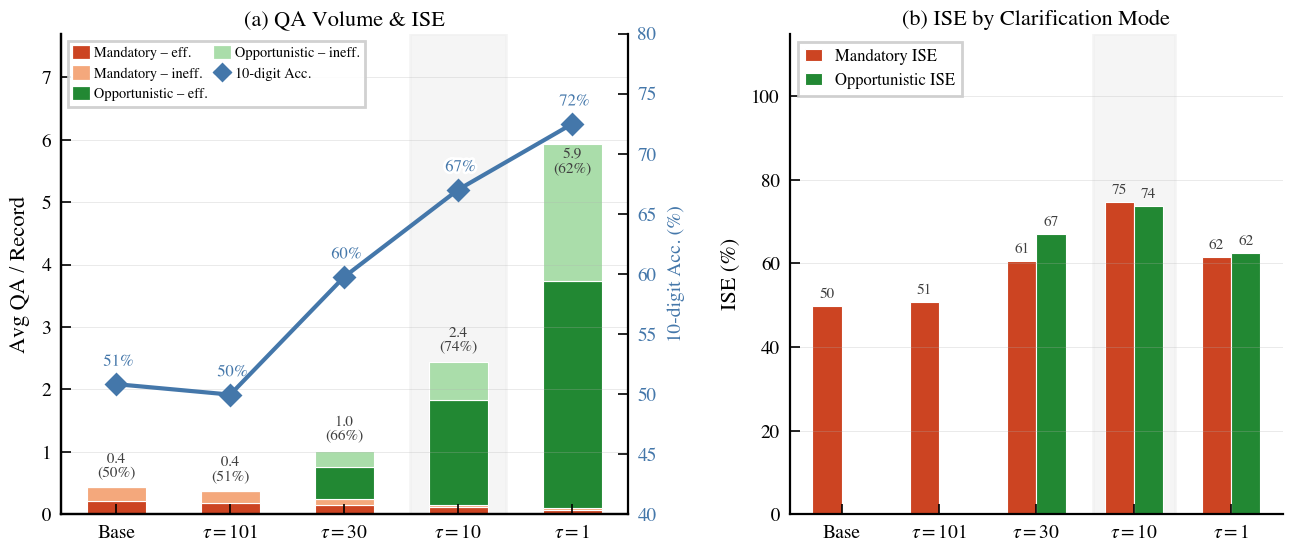}
\caption{
  \textbf{Clarification behavior on CBP-NY (n=1{,}181), Claude Opus 4.6.}
  \textbf{(a) QA Volume \& ISE.}
  Stacked bars: average QA per record, split by mode
  (orange: mandatory; green: opportunistic) and outcome
  (solid: effective; light: ineffective). Right axis: 10-digit
  accuracy (blue diamonds). Annotations above each bar: total
  QA/record and overall ISE~(\%). The $\tau{=}10$ condition
  (outlined) shifts volume from mandatory to opportunistic while
  maximising accuracy.
  \textbf{(b) ISE by Clarification Mode.}
  Grouped bars compare mandatory (orange) vs.\ opportunistic
  (green) ISE at each threshold; at $\tau{=}10$, both reach
  $\approx$75\%, while at $\tau{=}1$ both drop to 62\%
  (over-triggering).
}
\label{fig:ise}
\end{figure*}

\subsection{Threshold Sensitivity}
\label{sec:threshold}

We do not tune $\tau$ on target benchmarks; instead we use
threshold sweeps to characterize behavior on CBP-NY and lock
$\tau{=}10$ before any ATLAS or HSCodeComp evaluation. Treating
$\tau$ as a \emph{behavioral phase control} in this analysis,
sweeping it maps out a phase diagram of help-seeking structure
(Table~\ref{tab:threshold}, Appendix~\ref{app:threshold_detail}).
The $\tau{=}50{\to}30$ transition marks the clearest phase
boundary (opportunistic rate: 9.7\%$\to$51.9\%; accuracy:
51.2\%$\to$59.8\%), corresponding to the onset of widespread
opportunistic mode activation.
At $\tau{=}10$, 90.9\% of records involve help-seeking (76.8\%
opportunistic), achieving 74\% ISE at only 2.4~QA/record.
Navigation steps are unchanged across all settings (5.4--5.7),
indicating that gating alters \emph{where} the agent seeks help,
not \emph{how} it navigates.
The sweep was conducted on CBP-NY only; $\tau{=}10$ was locked
before any ATLAS or HSCodeComp evaluation
(selection protocol in Appendix~\ref{app:threshold_detail}).

\subsection{Separability: Help Localization vs.\ Answer Quality}
\label{sec:separability}

The regime shift and ISE improvement above are observed under
controlled answers.
The critical test is whether these patterns reflect the agent's
information-seeking ability or are artifacts of answer quality.
A separability test (Table~\ref{tab:oracle_ablation} in
Appendix~\ref{app:oracle_ablation}) replaces the controlled
channel with fully-automated answers derived from the product
description alone.

\textbf{Accuracy collapses; information-seeking pattern is preserved.}
\method{} loses nearly all accuracy gain at 10-digit
(67.0\%~$\to$~48.2\%) while the baseline is unaffected
(50.8\%~$\to$~49.2\%): a $-$18.8\% gap attributable to answer
quality.
Crucially, the information-seeking pattern
itself (mandatory/opportunistic split, ISE ranking across
thresholds) is preserved even when answer quality is degraded.

\textbf{Interpretation.}
The agent can locate states where reasoning needs help even when
the answer source is weak; what it cannot do without a
knowledgeable respondent is \emph{realize} the downstream accuracy
benefit.
This dissociation between localization and realization
provides evidence that the two can be analyzed as separable
factors under controlled degradation, with localization behavior
appearing more stable than realized accuracy across answer-quality
conditions.
The behavioral signature additionally satisfies four validity
criteria:
\emph{stability} (replicates across 4~LLM families and
3~benchmarks), \emph{interpretability} (mandatory vs.\
opportunistic modes have clear semantic content),
\emph{contrastiveness} (three diagnostic contrasts fail to produce
the same structure), and \emph{predictive local utility} (ISE
indicates that identified help states are productive at the
one-step level).

\textbf{Knowledge-channel audit.}
We audit all 2{,}875 Q/A pairs from the CBP-NY run with action
rating ($\tau{=}10$) along two axes: six question categories
(e.g., Material/Composition, Function/Use) and five answer types
(Product Attribute, PA-Essential Character,
Classification Criteria~(CC), Unavailable, Deflected).
Table~\ref{tab:qa_crosstab} shows the full cross-tabulation: 80\%
of answers are plain product attributes, and only 3.5\% of questions
(102 pairs) explicitly name an HTS chapter, heading, or note.
Table~\ref{tab:qa_crosstab_hts} isolates this HTS-referencing
subset: the guardrail deflects 34\% outright; 37\% yield plain
product-attribute answers; and only 23 pairs (0.8\% of all 2{,}875)
reach a \emph{Classification Criteria} answer (i.e., the oracle
directly affirms or denies a named chapter note or heading criterion,
such as ``yes, it qualifies as rubber under Chapter~40, Note~1'').
Table~\ref{tab:post_qa_navigation} then shows that even those 23
CC answers produce \emph{lower} navigation success than plain
product-attribute answers (ISE 62.5\% vs.\ 76.2\%), arguing
against oracle leakage as the main driver of the accuracy gain
(Q\&A examples and per-level breakdown in
Appendix~\ref{app:knowledge_audit}).

Trajectory-level analysis (Appendix~\ref{app:trajectory_detail})
shows that the score gap between the top-ranked navigation
action and clarification narrows at deeper hierarchy levels,
consistent with increasing local ambiguity at finer
classification granularity, while navigation steps and
backtracking rates remain comparable to the baseline.

\subsection{Qualitative Clarification Behavior}
\label{sec:qualitative}

To illustrate what the two modes look like in practice, we present
representative examples from the CBP-NY evaluation.

\paragraph{Mandatory clarification.}
For \emph{oval-shaped sugar confectionery cough drops containing
10\,mg menthol}, the baseline agent commits to pharmaceuticals
based on the dosage language (``10\,mg per dose'').
Under \method{}, no navigation branch scores above $\tau$ at the
Lv.2 node; \texttt{need\_clarify} is top-ranked (score 68, next
branch 31).
The agent asks: ``Is this product put up in measured doses or for
retail sale as a medicament?''
The answer (``No, this is sugar confectionery for retail
consumption'') resolves the ambiguity, and the re-scored
navigation correctly routes to confectionery.

\paragraph{Opportunistic clarification.}
For a \emph{granular copolymer: 69\% butadiene, 20\% methyl
methacrylate, 9\% methacrylic acid, 2\% divinylbenzene}, the
agent's top-ranked navigation action at Lv.3 is ``polymers of
olefins'' (score 72), but \texttt{need\_clarify} appears at rank~2
(score 48, above $\tau{=}10$).
The question (``Is butadiene considered an olefin for
classification purposes?'') targets a domain convention:
chemically, butadiene is a conjugated diene, but classification
convention treats butadiene polymers as olefin polymers.
The controlled answer confirms the convention; without it, the
automated system applies strict chemistry and misroutes to
``other resins.''

\paragraph{Pattern.}
Mandatory questions tend to target \emph{missing essential
attributes} (composition, primary use, form) at early levels where
no branch is preferred, while opportunistic questions target
\emph{fine-grained disambiguation} (domain conventions, threshold
values) at deeper levels where a leading branch exists.
The two modes capture structurally different information needs,
not merely different confidence levels.

\section{Discussion}
\label{sec:discussion}

\begin{figure}[t]
\centering
\includegraphics[width=\columnwidth]{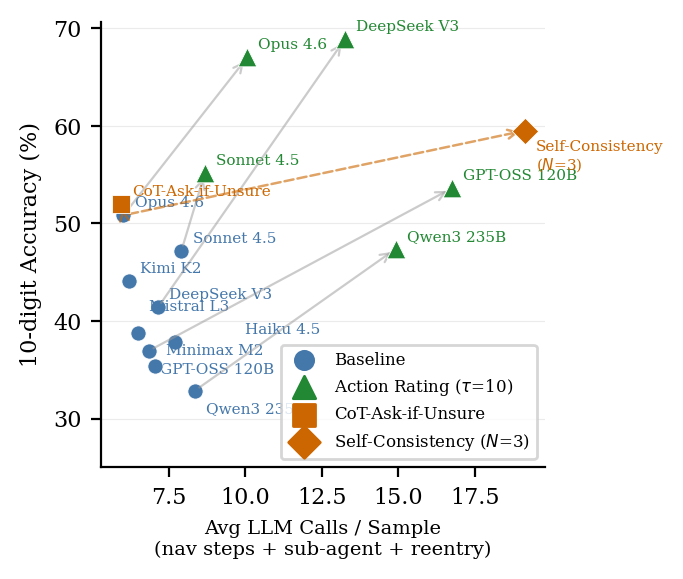}
\caption{10-digit accuracy vs.\ inference cost (LLM calls/sample).
  Every model improves with \method{} (gray arrows); simpler
  triggers (orange) do not reproduce the regime shift.
  Accuracy values are upper bounds under the controlled answer channel (product-owner simulation).}
\label{fig:acc_vs_cost}
\end{figure}

Figure~\ref{fig:acc_vs_cost} shows that the behavioral signature
is model-agnostic and that simpler triggers lie below the \method{}
Pareto frontier.
We highlight three implications.

\paragraph{Portability.}
The two-layer architecture separates domain instantiation
(Layer~1: knowledge graph, classification protocols, answer
channel) from the analysis protocol (Layer~2: shared ordinal
scale, threshold policy, ISE, separability test).
Layer~2 requires only a tree-structured action space with
potential information gaps; candidate domains include
medical coding (ICD-10), product classification (CPC), and
legal statute navigation.

\paragraph{Cost--accuracy trade-off.}
\method{} increases inference cost from 6.0 to 10.4 LLM calls per
record (73\% overhead), with the increase sublinear in accuracy
gain. Self-Consistency at $N{=}3$ achieves $+$8.7\% at 19 calls
(3.2$\times$ cost), placing it below the \method{} Pareto frontier
(Figure~\ref{fig:acc_vs_cost}).

\paragraph{Toward realistic answer sources.}
The controlled answer channel is a clean measurement baseline.
A natural next step is an \emph{answer-source ladder} varying
answer quality from the controlled channel through retrieval and
weaker LLMs to noisy human-like responses, to map how localization
benefits degrade as answer quality decreases.
The separability result (\S\ref{sec:separability}) predicts that
information-seeking \emph{patterns} remain stable across this
ladder even as accuracy varies.

\paragraph{Decomposing help-seeking.}
Separability motivates decomposing help-seeking into three
factors: localization (where to ask), question quality (what to
ask), and answer-source quality (who answers). This paper isolates
the first.

\section{Conclusion}
\label{sec:conclusion}

\method{} places clarification inside a language agent's action
space on a shared ordinal scale with navigation, yielding a
self-gated mechanism that requires no external uncertainty estimator.
We frame the contribution as a measurement protocol for
self-gated information-seeking behavior under a controlled answer
channel, not a deployment system; reported accuracy gains are
upper bounds.
The framework reveals a regime shift from mandatory to
opportunistic information-seeking (ISE: 50\%$\to$74\%), stable
across four LLM families, three benchmarks, and a range of
thresholds, with the two modes serving distinct linguistic
functions.
Three diagnostic contrasts fail to reproduce this structure, and
a separability test ($-$18.8\% accuracy under degraded answers)
supports an empirical separation between help localization and
answer-source quality. Generalization beyond HTS and evaluation
with realistic answer sources remain open directions.
\section*{Limitations}

\noindent\textbf{Controlled answer channel, not deployment.}
Accuracy numbers are upper bounds under a controlled answer
channel; deployment with realistic information sources would yield
lower gains.
We do not yet systematically vary answer quality across
intermediate regimes.

\textbf{Single domain.}
Evaluation covers three independent HTS benchmarks but
generalization to structurally distinct taxonomies (ICD-10, CPC,
legal statutes) requires re-implementing the domain layer.

\textbf{Action scores are not calibrated.}
\method{} assumes the LLM produces meaningful ordinal scores but
does not claim calibrated confidence
estimation~\cite{guo2017calibration,xiong2024can};
$\tau$ may require re-tuning across models.

\textbf{Observational taxonomy.}
The mandatory/opportunistic distinction is derived from the
agent's own ratings, not from ground-truth epistemic states.

\textbf{Practical constraints.}
Opportunistic mode issues multiple inline QA rounds per step;
latency may offset accuracy gains.
Evaluation uses English-language descriptions only.

\section*{Ethics Statement}

HTS classification has direct financial and regulatory implications:
incorrect codes can lead to improper duty assessment, and automation
errors may have downstream financial or regulatory consequences.
Our system is intended as a decision-support tool and should be
validated by domain experts before deployment.
The benchmark data is derived from publicly available CBP rulings
and does not contain personally identifiable information.


\bibliography{anthology,custom}

\newpage
\appendix

\section*{Appendix Roadmap}
\noindent The appendix is organized to support specific reviewer
questions rather than to extend the main argument.
\textbf{App.\ \ref{app:proofs}} provides formal sanity checks
(threshold-policy single-crossing condition, bounded reentry) and
a worked example of the self-gated cycle.
\textbf{App.\ \ref{app:kg}} documents the knowledge graph and
classification protocols underlying CBP-NY.
\textbf{App.\ \ref{app:oracle_ablation}--\ref{app:knowledge_audit}}
audit the controlled answer channel and argue against oracle
leakage as the primary driver of accuracy.
\textbf{App.\ \ref{app:mdp_validation}--\ref{app:sc}} cover MDP
component ablations and the diagnostic-contrast baselines
(CoT-Ask-if-Unsure, Self-Consistency).
\textbf{App.\ \ref{app:multimodel_detail}--\ref{app:crossbench_detail}}
report multi-model and cross-benchmark transfer under the locked
$\tau{=}10$ protocol.
\textbf{App.\ \ref{app:threshold_detail}--\ref{app:trajectory_detail}}
present threshold sensitivity and trajectory-level mechanism
analysis.
\textbf{App.\ \ref{app:extraction_prompt}--\ref{app:rating_prompt}}
release dataset-construction and full prompt templates for
reproducibility.

\section{Formal Properties and Proofs}
\label{app:proofs}

\subsection{Worked Example of the Self-Gated Reentry Cycle}
\label{app:reentry_example}

Figure~\ref{fig:framework} traces a single record through the
self-gated reentry cycle on a CBP-NY example, illustrating how
clarification competes with navigation on a shared ordinal scale
rather than being externally triggered.

\paragraph{Top-down traversal.}
The agent enters the root of the HTS tree and scores every
candidate action---each child branch as well as
\texttt{need\_clarify}---on the shared $[0,100]$ scale.
At the 2-digit and 4-digit levels (left panel), one navigation
branch dominates, so the agent commits without invoking the
clarification action. This corresponds to the standard
hierarchical decoding regime: the rating layer is active at every
step, but the gating condition is not met.

\paragraph{Reentry loop at the 8-digit node.}
Descent stalls at the 8-digit node (dashed box), where no single
branch exceeds the others by a clear margin and \texttt{clarify}
rises to score~45, above $\tau{=}10$. This triggers the reentry
loop: a clarification sub-agent generates a question, queries the
controlled answer channel, and the resulting $\langle Q, A\rangle$
pair is appended to the observation. The agent then re-scores
all actions at the same node with the augmented context (right
panel, Round~2). The clarification action drops, and the leading
\texttt{traverse} action now dominates at $\geq 92$, allowing the
agent to commit and continue toward the 10-digit leaf.

\paragraph{Why this matters for measurement.}
The two-round score trace exposes the behavioral object the rest
of the paper measures: a state where clarification temporarily
out-competes navigation, the agent acts on that signal, and a
re-scored navigation step follows. Mode taxonomy
(mandatory vs.\ opportunistic) is read from the rank structure of
the first round; ISE is read from whether the post-injection
action is correct. Without the shared-scale rating, none of these
quantities is observable from the trace alone.

\begin{figure*}[!t]
\centering
\includegraphics[width=0.82\textwidth]{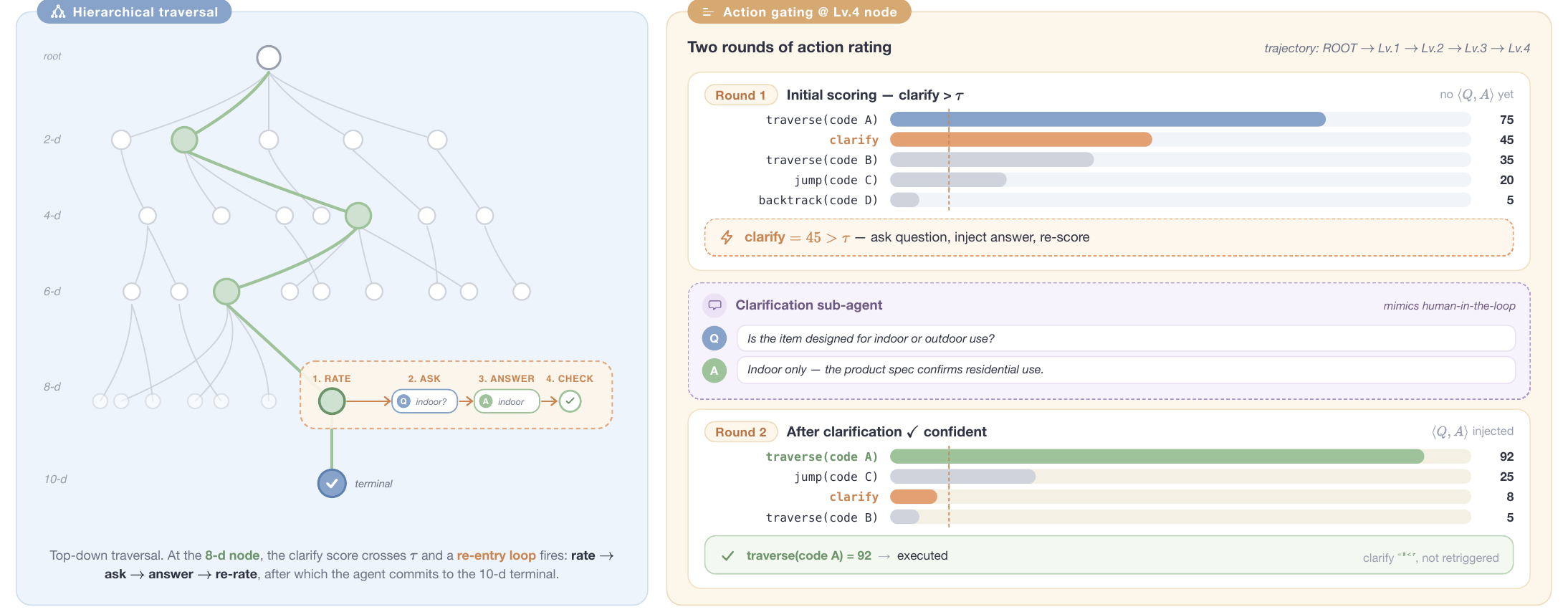}
\caption{Self-gated clarification cycle within hierarchical taxonomy navigation.
\textbf{Left:} Top-down traversal of the taxonomy tree.
At each internal node the agent scores all candidate actions, including \texttt{need\_clarify}, on a shared $[0,100]$ ordinal scale.
When the clarification score exceeds threshold~$\tau$ at the 8-digit node (dashed box), the agent enters a \emph{reentry loop}: a sub-agent resolves the query and injects $\langle Q, A \rangle$ before re-scoring.
\textbf{Right:} Two rounds of action rating. Round~1: \texttt{clarify} scores 45 ($>\tau$), triggering the sub-agent. Round~2: after answer injection the leading \texttt{traverse} action dominates ($\geq 92$); the episode terminates at the 10-digit leaf via \texttt{confirm}.}
\label{fig:framework}
\end{figure*}

\subsection{Threshold-Policy Sanity Check (Theorem~\ref{thm:threshold})}

\paragraph{Scope.}
This subsection states a sufficient condition under which a
threshold policy on $q(s)$ is optimal within the threshold family.
We do \emph{not} assume LLM-emitted scores satisfy this condition;
the result is a conceptual sanity check rather than an empirical
guarantee.

\paragraph{Setup.}
Let $\mathcal{S}$ be the set of decision states encountered by the
navigator, and let $q(s) \in \mathbb{R}$ be the agent's rating
assigned to the clarification action at state $s$.
Define the \emph{net downstream gain of clarification}:
\[
  \Delta(s) := V^{\mathrm{ask}}(s) - V^{\mathrm{act}}(s),
\]
where $V^{\mathrm{ask}}(s)$ is the expected downstream return from
clarifying before acting, and $V^{\mathrm{act}}(s)$ is the expected
downstream return from acting immediately.
For a threshold $\tau$, define the clarification policy
$\pi_\tau(s) = \mathbf{1}\{q(s) \ge \tau\}$,
with expected utility
$U(\tau) := \mathbb{E}\!\bigl[\Delta(s)\,\mathbf{1}\{q(s) \ge \tau\}\bigr]$.

\begin{proposition}[Monotonicity of trigger sets]
\label{prop:monotone}
For $A_\tau := \{s : q(s) \ge \tau\}$ and any $\tau_1 < \tau_2$,
we have $A_{\tau_2} \subseteq A_{\tau_1}$.
Hence lowering the threshold can only add clarification-triggered
states, and any nonnegative clarification-cost functional is weakly
decreasing in $\tau$.
\end{proposition}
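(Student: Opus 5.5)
The inclusion follows directly from transitivity of the order on $\mathbb{R}$. Fix $\tau_1 < \tau_2$ and take any $s \in A_{\tau_2}$, so that $q(s) \ge \tau_2$. Then $q(s) \ge \tau_2 > \tau_1$, hence $s \in A_{\tau_1}$. This gives $A_{\tau_2} \subseteq A_{\tau_1}$ with no assumption on $q$ beyond real-valuedness; in particular, calibration of the LLM scores is not required. The first consequence in the statement, that lowering the threshold can only add clarification-triggered states, is just a restatement: moving from $\tau_2$ down to $\tau_1$ replaces the trigger set $A_{\tau_2}$ with a superset $A_{\tau_1}$. Equivalently, $\pi_{\tau_1}(s) \ge \pi_{\tau_2}(s)$ pointwise, since $\mathbf{1}\{q(s)\ge\tau\}$ is nonincreasing in $\tau$ for each fixed $s$.

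For the cost claim, I first need to pin down what a ``nonnegative clarification-cost functional'' means, because the statement leaves this implicit. I will take it to be a functional of the form
\[
  C(\tau) = \mathbb{E}\bigl[c(s)\,\mathbf{1}\{q(s)\ge\tau\}\bigr],
\]
with a measurable per-state cost $c \ge 0$; counting clarifications is the special case $c \equiv 1$. More generally, it can be any set function $C(A_\tau)$ that is nonnegative and monotone under inclusion, i.e.\ a nonnegative measure of the trigger set.

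Under either reading the argument is the same. The pointwise inequality $c(s)\mathbf{1}\{q(s)\ge\tau_2\} \le c(s)\mathbf{1}\{q(s)\ge\tau_1\}$ uses $c \ge 0$. Integrating it (monotonicity of expectation) gives $C(\tau_2) \le C(\tau_1)$, so $C$ is weakly decreasing in $\tau$.

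The main obstacle is not technical. It is being explicit that nonnegativity of $c$ is what makes the second claim work. The analogous statement fails for $U(\tau) = \mathbb{E}[\Delta(s)\mathbf{1}\{q(s)\ge\tau\}]$, because $\Delta$ can take either sign. That is why the threshold-optimality result needs a separate single-crossing condition rather than following from this proposition. I will add a remark to that effect, so the proposition is read only as a monotonicity statement about trigger sets and costs.
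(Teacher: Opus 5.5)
Your proof is correct and matches the paper's own: you get the inclusion from $q(s)\ge\tau_2>\tau_1$, then multiply the pointwise inequality $\mathbf{1}\{q(s)\ge\tau_2\}\le\mathbf{1}\{q(s)\ge\tau_1\}$ by a nonnegative per-state cost $c(s)$ and take expectations, which is exactly how the paper reads ``cost functional.'' Under your more general set-function reading, the conclusion follows directly from the inclusion and the monotonicity of $C$, so the pointwise integral argument is not needed there and the two readings do not share ``the same'' argument.
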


\noindent\textbf{Assumption A} (Single-crossing conditional gain).
Let $m(t) := \mathbb{E}[\Delta(s) \mid q(s){=}t]$.
Assume $m$ is integrable and there exists $\tau^\star$ such that
$m(t) \le 0$ for $t < \tau^\star$ and
$m(t) \ge 0$ for $t \ge \tau^\star$.

\begin{theorem}[Optimal threshold under single crossing]
\label{thm:threshold}
Under Assumption~A, the threshold $\tau^\star$ maximizes
$U(\tau)$ over the threshold family $\{\pi_\tau\}_\tau$.
\end{theorem}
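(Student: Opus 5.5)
The plan is to reduce $U(\tau)$ to a one-dimensional integral of the conditional gain $m$ against the law of the score, and then compare $U(\tau)$ with $U(\tau^\star)$ directly. Let $\mu$ denote the distribution of $q(s)$ under the state distribution. By the tower property (conditioning on $q(s)$), and using integrability of $m$ from Assumption~A,
\[
U(\tau) = \mathbb{E}\bigl[\mathbb{E}[\Delta(s)\mid q(s)]\,\mathbf{1}\{q(s)\ge\tau\}\bigr] = \int_{[\tau,\infty)} m(t)\,d\mu(t).
\]
This representation makes the role of the single-crossing condition transparent: the threshold policy integrates $m$ over an upper tail, and Assumption~A says $m$ is nonpositive below $\tau^\star$ and nonnegative above.

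Next, fix an arbitrary $\tau$ and split into two cases. If $\tau<\tau^\star$, then by Proposition~\ref{prop:monotone} we have $A_{\tau^\star}\subseteq A_\tau$. The difference is
\[
U(\tau^\star)-U(\tau) = -\int_{[\tau,\tau^\star)} m(t)\,d\mu(t) \ge 0,
\]
since $m\le 0$ on $[\tau,\tau^\star)$. If $\tau>\tau^\star$, then $A_\tau\subseteq A_{\tau^\star}$ and
\[
U(\tau^\star)-U(\tau) = \int_{[\tau^\star,\tau)} m(t)\,d\mu(t) \ge 0,
\]
since $m\ge 0$ on $[\tau^\star,\tau)$. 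The case $\tau=\tau^\star$ is trivial. Together these give $U(\tau^\star)\ge U(\tau)$ for every $\tau$, which is the claim.

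The main obstacle is purely measure-theoretic rather than conceptual. First, $m(t)=\mathbb{E}[\Delta\mid q=t]$ is defined only $\mu$-almost everywhere, so I would read Assumption~A as holding $\mu$-a.e. The integrals above are insensitive to this. Second, I must make sure the endpoint convention matches the policy. Since the policy uses $q(s)\ge\tau$ and the assumption puts $t=\tau^\star$ on the nonnegative side, the half-open intervals $[\tau,\tau^\star)$ and $[\tau^\star,\tau)$ line up exactly, so atoms of $\mu$ at $\tau^\star$ cause no trouble. Finally, I would note that integrability of $\Delta$ (or of $m$ with respect to $\mu$) is what justifies the tower-property step and the splitting of integrals. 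I would state this as the standing interpretation of ``$m$ is integrable.''
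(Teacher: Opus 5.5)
Your proposal is correct and matches the paper's proof essentially step for step: the same tower-property rewrite $U(\tau)=\mathbb{E}[m(X)\mathbf{1}\{X\ge\tau\}]$, then the same two cases, where the difference reduces to an integral of $m$ over $[\tau,\tau^\star)$ or $[\tau^\star,\tau)$ whose sign is fixed by Assumption~A. Your measure-theoretic remarks (reading Assumption~A $\mu$-a.e., noting that the $\ge$ convention at $\tau^\star$ lines up with the half-open intervals, and relying on integrability to justify the conditioning) are sound refinements that the paper leaves implicit.
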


\subsection{Bounded Reentry}

\paragraph{Bounded reentry (stated in \S\ref{sec:reentry}).}
Each node $v$ permits at most $C$ clarifications (duplicate guard).
Let $\mathcal{V}_{\text{ep}}$ be the set of nodes visited during
the episode.
The total number of clarification events is bounded by
$N_{\text{clarify}} \le C \cdot |\mathcal{V}_{\text{ep}}|$.
Each clarification incurs exactly 2 additional LLM calls
(one sub-agent call + one reentry re-selection), so clarification
adds at most $2 N_{\text{clarify}}$ calls.
Navigation actions are bounded by $H$ (the episode step limit).
Therefore the total call count satisfies
$N_{\text{total}} \le H + 2C|\mathcal{V}_{\text{ep}}|$,
which is finite since $|\mathcal{V}_{\text{ep}}| \le H$ (each
navigation step visits at most one new node).
In our implementation, $C = 2$ and $H = 20$, giving
$N_{\text{total}} \le 20 + 4 \cdot 20 = 100$.

\begin{proof}[Proof of Proposition~\ref{prop:monotone}]
By definition, $A_\tau = \{s : q(s) \ge \tau\}$.
For $\tau_1 < \tau_2$, if $s \in A_{\tau_2}$ then
$q(s) \ge \tau_2 > \tau_1$, so $s \in A_{\tau_1}$.
Therefore $A_{\tau_2} \subseteq A_{\tau_1}$.
For any nonnegative $c(s) \ge 0$, the inclusion implies
$\mathbf{1}\{q(s) \ge \tau_2\} \le \mathbf{1}\{q(s) \ge \tau_1\}$
for all $s$, so
$\mathbb{E}[c(s)\,\mathbf{1}\{q(s) \ge \tau_2\}]
\le \mathbb{E}[c(s)\,\mathbf{1}\{q(s) \ge \tau_1\}]$.
\end{proof}

\begin{proof}[Proof of Theorem~\ref{thm:threshold}]
Let $X = q(s)$.
By the law of iterated expectation,
\begin{align*}
  U(\tau)
  &= \mathbb{E}\!\bigl[\Delta(s)\,\mathbf{1}\{X \ge \tau\}\bigr]\\
  &= \mathbb{E}\!\bigl[m(X)\,\mathbf{1}\{X \ge \tau\}\bigr],
\end{align*}
where $m(t) := \mathbb{E}[\Delta(s) \mid q(s){=}t]$.

\emph{Case 1:} $\tau < \tau^\star$.
Then
\[
  U(\tau^\star) {-} U(\tau)
  = -\mathbb{E}\!\bigl[m(X)\,\mathbf{1}\{\tau {\le} X {<} \tau^\star\}\bigr]
  \ge 0,
\]
because $m(X) \le 0$ on $\{X < \tau^\star\}$ by Assumption~A.

\emph{Case 2:} $\tau > \tau^\star$.
Then
\[
  U(\tau^\star) {-} U(\tau)
  = \mathbb{E}\!\bigl[m(X)\,\mathbf{1}\{\tau^\star {\le} X {<} \tau\}\bigr]
  \ge 0,
\]
because $m(X) \ge 0$ on $\{X \ge \tau^\star\}$ by Assumption~A.

Thus $U(\tau^\star) \ge U(\tau)$ for every $\tau$, establishing
optimality.
\end{proof}

\begin{corollary}[Selective clarification can outperform both
extremes]
\label{cor:dominance}
Under Assumption~A, if there exist both positive-gain and
negative-gain clarification states with nonzero probability
(i.e.\ $\mathbb{P}(\Delta(s) > 0,\; q(s) \ge \tau^\star) > 0$ and
$\mathbb{P}(\Delta(s) < 0,\; q(s) < \tau^\star) > 0$),
then
$U(\tau^\star) > U(+\infty) = 0$ and
$U(\tau^\star) > U(-\infty) = \mathbb{E}[\Delta(s)]$.
\end{corollary}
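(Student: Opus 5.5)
The plan is to reuse the representation from the proof of Theorem~\ref{thm:threshold}. With $X=q(s)$ and $m(t)=\mathbb{E}[\Delta(s)\mid q(s)=t]$, write
\[
U(\tau^\star)=\mathbb{E}[m(X)\mathbf{1}\{X\ge\tau^\star\}]
=\mathbb{E}[\Delta(s)\mathbf{1}\{X\ge\tau^\star\}].
\]
The two extremes are handled first. $U(+\infty)=0$ because the indicator vanishes; formally this is $\lim_{\tau\to\infty}U(\tau)$, and dominated convergence applies since $\Delta$ is integrable. $U(-\infty)=\mathbb{E}[\Delta(s)]$ because the indicator is identically one. This gives the two decompositions
\[
U(\tau^\star)-U(+\infty)=\mathbb{E}[\Delta\mathbf{1}\{X\ge\tau^\star\}],
\qquad
U(\tau^\star)-U(-\infty)=-\mathbb{E}[\Delta\mathbf{1}\{X<\tau^\star\}].
\]
Theorem~\ref{thm:threshold} already shows both differences are $\ge 0$. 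The whole task is therefore upgrading each to a strict inequality.

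\textbf{Step 1 (upper region).} Assumption~A gives $m(X)\ge 0$ on $\{X\ge\tau^\star\}$, so $\mathbb{E}[\Delta\mathbf{1}\{X\ge\tau^\star\}]\ge 0$. To get strictness I would need the integral to be nonzero. The hypothesis $\mathbb{P}(\Delta>0, X\ge\tau^\star)>0$ does not immediately give this: positive and negative values of $\Delta$ can cancel within a level set $\{X=t\}$ while $m(t)=0$.

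\textbf{Step 2 (lower region).} By the symmetric argument, $m\le 0$ on $\{X<\tau^\star\}$ gives $-\mathbb{E}[\Delta\mathbf{1}\{X<\tau^\star\}]\ge 0$, and the same cancellation issue blocks strictness.

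\textbf{Main obstacle.} As literally stated, the positivity hypotheses are about $\Delta$, not $m$, so the corollary can fail. A counterexample: on $\{X\ge\tau^\star\}$, let $\Delta=\pm1$ with equal conditional probability. Then $m=0$ there and $U(\tau^\star)=0=U(+\infty)$.

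My plan is to close the gap in one of two ways, stated explicitly as a repair of the hypothesis:
\begin{itemize}
\item Strengthen the hypothesis to the conditional-gain version, $\mathbb{P}(m(X)>0, X\ge\tau^\star)>0$ and $\mathbb{P}(m(X)<0, X<\tau^\star)>0$. Then the standard fact that a nonnegative function with positive mass on a set has strictly positive integral yields both strict inequalities directly from the Step~1 and Step~2 expressions.
\item Alternatively, adopt the implicit assumption that $\Delta(s)$ is a deterministic function of $q(s)$ (equivalently $\Delta=m(X)$). Under this reading the stated hypotheses are exactly the strengthened ones, and the same argument applies.
\end{itemize}
I would present the proof under the strengthened hypothesis and remark that it coincides with the stated one in the deterministic case.
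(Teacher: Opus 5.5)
Your proposal is correct, and after your repair it is the paper's own argument. The paper also writes $U(\tau^\star)-U(+\infty)=\mathbb{E}[m(X)\mathbf{1}\{X\ge\tau^\star\}]$ and $U(-\infty)-U(\tau^\star)=\mathbb{E}[m(X)\mathbf{1}\{X<\tau^\star\}]$, and then reads off the signs. The one difference is that the paper simply asserts that $m(X)$ is strictly positive (resp.\ negative) on a positive-measure subset of $\{X\ge\tau^\star\}$ (resp.\ $\{X<\tau^\star\}$). That assertion is exactly your conditional-gain hypothesis, and the paper never derives it from the stated $\Delta$-level hypothesis. So the gap you found lies in the paper's statement and proof, not in your plan.

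Your counterexample should also specify the lower region, so that Assumption~A and the second positivity condition visibly hold. The cleanest version takes $q(s)\in\{0,1\}$ equiprobable, $\tau^\star=1$, and $\Delta=\pm1$ with conditional probability $1/2$ each, given either value of $q(s)$. Then $m\equiv 0$, so Assumption~A holds, and $\mathbb{P}(\Delta>0,\,q(s)\ge\tau^\star)=\mathbb{P}(\Delta<0,\,q(s)<\tau^\star)=1/4$. Yet $U(\tau^\star)=U(+\infty)=U(-\infty)=0$, so both strict inequalities fail at once.

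Your repair is also the sharp one. Given the sign pattern in Assumption~A, a nonnegative (resp.\ nonpositive) integrand has nonzero integral exactly when it is nonzero with positive probability. Hence $U(\tau^\star)>U(+\infty)$ holds if and only if $\mathbb{P}(m(X)>0,\,X\ge\tau^\star)>0$. Likewise, $U(\tau^\star)>U(-\infty)$ holds if and only if $\mathbb{P}(m(X)<0,\,X<\tau^\star)>0$. These conditions coincide with the stated ones whenever $\Delta$ is almost surely a function of $q(s)$.
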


\begin{proof}
The no-clarification policy gives $U(+\infty) = 0$.
Since $m(X) \ge 0$ on $\{X \ge \tau^\star\}$ with strict positivity
on a subset of positive measure,
$U(\tau^\star) = \mathbb{E}[m(X)\,\mathbf{1}\{X \ge \tau^\star\}] > 0$.
The always-clarify policy gives
$U(-\infty) = \mathbb{E}[\Delta(s)]$. Then
\[
  U(-\infty) - U(\tau^\star)
  = \mathbb{E}\!\bigl[m(X)\,\mathbf{1}\{X {<} \tau^\star\}\bigr].
\]
Because $m(X) \le 0$ on $\{X < \tau^\star\}$ with strict negativity
on a subset of positive measure, the right-hand side is strictly
negative, so $U(\tau^\star) > U(-\infty)$.
\end{proof}

\noindent\textbf{Interpretation.}
This corollary formalizes a simple intuition: some states benefit
from clarification, while others do not.
A policy that asks everywhere pays unnecessary clarification cost,
whereas a policy that never asks forgoes high-value interventions.
A threshold policy can improve over both by selectively retaining
only those states whose expected clarification gain is nonnegative.

\section{Knowledge Graph Construction}
\label{app:kg}

We represent the HTS as an augmented directed graph
$\mathcal{G} = (V,\, E_T \cup E_R)$ constructed from the official
USITC HTS 2025 Revision~26 data~\cite{hts_data}, where $|V|=30{,}202$
nodes span five hierarchical levels (chapter $\to$ heading $\to$
subheading $\to$ tariff item $\to$ statistical suffix).

\paragraph{Node structure.}
Each node $v \in V$ stores structured classification metadata:
\begin{align*}
  v = \{&\texttt{code},\; \texttt{description},\; \texttt{guidance},\; \\
         &\texttt{parent},\; \texttt{children},\; \texttt{excludes},\;
          \phi_v\}
\end{align*}
where \texttt{guidance} is a concise classification hint generated by
an LLM (see below), and $\phi_v \in \{0,1\}^3$ are \emph{protocol
indicators}:
\[
  \phi_v = \{\texttt{is\_other},\; \texttt{is\_parts},\;
             \texttt{is\_set}\}
\]
These partition nodes into three categories requiring distinct reasoning
patterns:
(i)~catch-all ``other'' categories
($|\{v:\phi_v^{\text{other}}{=}1\}|=7{,}274$; 24\% of nodes),
(ii)~parts/accessories classifications
($|\{v:\phi_v^{\text{parts}}{=}1\}|=864$; 3\%), which require
validating functional relationships to parent systems, and
(iii)~composite goods/sets
($|\{v:\phi_v^{\text{set}}{=}1\}|=137$; 0.5\%), which require
essential-character analysis under GRI Rule~3.

\paragraph{Edge structure.}
The edge set comprises two disjoint types:
tree edges
$E_T = \{(v_p, v_c) : v_c \in \texttt{children}(v_p)\}$
and relational edges
$E_R = E_X \cup E_S \cup E_P$.
$E_X$ holds explicit exclusion edges extracted from chapter and
section notes ($|E_X|=2{,}847$);
$E_S$ captures implicit sibling mutual-exclusivity constraints;
$E_P$ encodes parts-to-parent-system relationships; each parts node
stores
\texttt{parent\_system}, \texttt{parent\_hts}, and
\texttt{relationship\_type}
to enable cross-heading validation jumps.
This hybrid topology supports three navigation modes:
(i)~hierarchical descent via $E_T$,
(ii)~cross-chapter jumps via $E_X$ when exclusions apply, and
(iii)~protocol-specific traversals via $E_S$ and $E_P$ for validation.

\paragraph{LLM-guided node guidance generation.}
The raw HTS descriptions are often terse legal text.
For each node we prompt an LLM to produce a short
(\(\le\)3 sentence) \texttt{guidance} field that
(a)~paraphrases the tariff description in plain language,
(b)~lists distinguishing product attributes (material, use, form), and
(c)~flags any applicable exclusion cross-references from the node's
chapter notes.
This guidance is injected into the agent's observation at each step,
providing domain context without requiring the agent to reason over
raw legal prose.

\section{Oracle Ablation: Human-in-the-Loop vs.\ Automated}
\label{app:oracle_ablation}

\begin{table}[ht]
\centering
\caption{Oracle Ablation Study: impact of removing HTS description and reasoning
traces from the clarification sub-agent. \textit{Oracle} = sub-agent has ground-truth
access (upper bound); \textit{Ablated} = no oracle data (leakage-free).
$\Delta$ = oracle$-$ablated accuracy drop.
ISE = fraction of QA interactions after which the agent's next traversal step
lands on the correct HTS path (\S\ref{sec:ise}).
Significance assessed by non-parametric paired bootstrap ($n_{\mathrm{boot}}{=}5{,}000$):
$^{***}$ 95\,\% CI strictly positive; $^{\mathrm{ns}}$ CI includes zero.
\dag~AR\,(oracle) significantly outperforms Base\,(oracle) at all digit levels ($^{***}$);
AR\,(ablated) does \emph{not} significantly outperform Base\,(ablated) ($^{\mathrm{ns}}$, e.g.\
10d: $-1.0$\,\%\,[${-}5.0$,\,$+3.0$]).
The interaction (${+}17.2$\,\%\,[${+}11.8$,\,$+22.9$], $^{***}$) confirms oracle data is
necessary for AR's gains.
AR\,(ablated) issues \emph{more} QA steps than the oracle condition (3{,}312 vs.\ 2{,}883)
yet at substantially lower ISE (56.2 vs.\ 73.7), explaining why
AR\,(ablated) accuracy collapses to near Base\,(ablated).}
\label{tab:oracle_ablation}
\resizebox{\columnwidth}{!}{%
\begin{tabular}{lrrrrrrrrr}
\toprule
\textbf{Condition} & \textbf{Succ.} & \textbf{2d} & \textbf{4d} & \textbf{6d} & \textbf{8d} & \textbf{10d} & \textbf{Steps} & \textbf{LLM calls} & \textbf{ISE} \\
\midrule
\multicolumn{10}{l}{\textit{With oracle data (upper bound)}} \\
Base\,(oracle)          & 97.3 & 79.3 & 70.9 & 61.8 & 54.4 & \textbf{50.8} & 5.59 & 6.0  & 49.9 \\
AR\,(oracle)\dag        & 97.8 & 87.2 & 82.0 & 75.2 & 69.5 & \textbf{67.0} & 5.44 & 10.1 & 73.7 \\
\midrule
\multicolumn{10}{l}{\textit{Without oracle data (leakage-free)}} \\
Base\,(ablated)         & 97.5 & 79.6 & 70.5 & 60.3 & 52.5 & \textbf{49.2} & 5.61 & 5.9  & 44.2 \\
AR\,(ablated)\dag       & 98.2 & 79.4 & 70.5 & 60.0 & 52.3 & \textbf{48.2} & 5.42 & 10.8 & 56.2 \\
\midrule
$\Delta$ Base$^{\mathrm{ns}}$ & $-$0.2 & $-$0.3 & $+$0.5 & $+$1.5 & $+$1.9 & $+$1.7 & & & $+$5.7 \\
$\Delta$ AR$^{***}$           & $-$0.4 & $+$7.8 & $+$11.5 & $+$15.2 & $+$17.2 & $+$18.8 & & & $+$17.5 \\
\bottomrule
\end{tabular}%
}
\end{table}

In the intended deployment, a knowledgeable human (product owner,
importer, or customs broker) answers clarification questions about
their own product.
The \emph{oracle} condition in our experiments simulates this:
the clarification sub-agent has access to the item's authoritative
ruling record, enabling it to provide confirmed product attributes
that a real product owner would know (material composition,
intended function, manufacturing method, physical specifications).
The \emph{ablated} condition removes all ruling access, forcing the
sub-agent to answer from the product description alone, simulating
fully-automated operation with no privileged answer source.
Two guardrails are in place throughout: clarification questions are
restricted to product-attribute queries, and any taxonomy-related
text in answers is masked before reaching the navigator.

\paragraph{ISE as the diagnostic lens.}
The ISE column in Table~\ref{tab:oracle_ablation} makes the
mechanism legible at a glance.
For \textbf{Base}, ablation drops ISE by only 5.7\% (49.9
$\to$ 44.2): Base asks few clarification questions and does not
depend heavily on answer quality, so degraded answers have little
effect.
For \textbf{AR}, ablation drops ISE by 17.5\% (73.7 $\to$ 56.2):
AR's inline clarification loop is tightly coupled to answer quality,
so degraded answers cascade into navigation errors.

\paragraph{More questions, worse outcomes.}
AR\,(ablated) actually issues \emph{more} QA steps than
AR\,(oracle) (3{,}312 vs.\ 2{,}883, a 15\% increase), yet converts
only 56.2\% into correct navigation steps compared with 73.7\% for
the oracle condition.
The result is that AR\,(ablated) accuracy (48.2\% at 10-digit)
collapses to near Base\,(ablated) (49.2\%), erasing the $+$18.8\%
gap entirely.
This is not a flaw in the design: it confirms that oracle quality
is precisely what makes the clarification loop effective, and the
ablation directly measures that dependence.
The gap therefore reflects domain expertise, not path leakage
(see \S\ref{sec:separability} and Appendix~\ref{app:knowledge_audit}).

\subsection*{Case Study: Same Question, Different Answers}

The four examples below illustrate why the guardrails are
insufficient to close the gap.
In each case, both conditions ask the \emph{same} question;
both answers contain no taxonomy codes; yet the answers are
factually contradictory, leading to different classification outcomes.

\paragraph{Example 1: Cough drops, ``medicament'' vs.\ ``confectionery.''}
\textit{Product}: Oval-shaped sugar confectionery cough drops containing 10~mg menthol with capsicum, eucalyptus oil, natural licorice.
\textit{Question}: ``whether product is put up in measured doses or for retail sale as medicament.''

\noindent\textbf{Oracle (human):} ``No, this product is not put up in measured doses or for retail sale as a medicament. It is a sugar confectionery cough drop ready for consumption.'' $\to$ \textit{correct}.

\noindent\textbf{Ablated (automated):} ``Yes, this product is packaged for retail as a medicament. The lozenges are labeled as a cough suppressant containing 10~mg menthol per dose.'' $\to$ \textit{sent to pharmaceuticals}.

The product description is legitimately ambiguous: ``10~mg menthol per dose'' is consistent with both a confectionery and a medicament framing. The human owner knows their product is candy; the automated system infers medicament from the dosage language.

\paragraph{Example 2: Novelty umbrella, ``functional'' vs.\ ``decorative.''}
\textit{Product}: Hand-held umbrella with telescopic shaft, metal frame, nylon fabric shaped like a bird or animal; marketed as a sporting accessory or novelty item.
\textit{Question}: ``whether the umbrella functions as an actual rain umbrella or is purely decorative/toy.''

\noindent\textbf{Oracle (human):} ``Yes, this is a functional umbrella capable of providing rain protection despite its novelty design.'' $\to$ \textit{correct}.

\noindent\textbf{Ablated (automated):} ``No, it is marketed as a sporting accessory or novelty item, not for rain protection.'' $\to$ \textit{sent to miscellaneous textile, adding 3 extra navigation steps}.

\paragraph{Example 3: Artificial amaryllis flower, essential character.}
\textit{Product}: Polyester and rayon amaryllis flower with thermoplastic coating, assembled by gluing.
\textit{Question}: ``whether plastic coating or textile substrate gives essential character.''

\noindent\textbf{Oracle (human):} ``The plastic coating gives the essential character.'' $\to$ \textit{correct (plastic artificial flowers)}.

\noindent\textbf{Ablated (automated):} ``The textile base gives the essential character, as the flower petals are made from nonwoven fabric.'' $\to$ \textit{sent to textile artificial flowers}.

This is a GRI~3(b) essential-character determination, one of the most judgment-dependent questions in taxonomy classification.
Both answers are plausible readings of the product description; the oracle resolves the ambiguity correctly because it has ruling context.

\paragraph{Example 4: Synthetic copolymer, domain interpretation.}
\textit{Product}: Granular copolymer: 69\% butadiene, 20\% methyl methacrylate, 9\% methacrylic acid, 2\% divinylbenzene.
\textit{Question}: ``whether butadiene is considered an olefin.''

\noindent\textbf{Oracle (human):} ``Yes, butadiene is an olefin; the product is a polymer of olefins with butadiene as the primary component at 69\%.'' $\to$ \textit{correct}.

\noindent\textbf{Ablated (automated):} ``No, butadiene is a diene with two double bonds, whereas olefins have a single double bond.'' $\to$ \textit{sent to other resins}.

Technically, both answers are chemically defensible: butadiene is a conjugated diene, not a simple alkene.
However, classification conventions treat butadiene polymers as polymers of olefins.
The automated system applies strict chemistry; the oracle applies domain convention.
No product-attribute guardrail can resolve this because the dispute is about chemistry, not codes.

\subsection*{Summary}
All four examples pass both guardrails (no code references in questions; no codes in answers), yet the answers are factually contradictory.
The human advantage is not about seeing taxonomy codes; it is about resolving genuine product ambiguity that only the product owner or a domain expert can definitively settle:
(1)~\emph{genuine product ambiguity} (Examples 1--2), where the description is legitimately ambiguous;
(2)~\emph{essential-character judgment} (Example 3), where GRI~3(b) requires a subjective call only the owner can make; and
(3)~\emph{domain interpretation} (Example 4), where classification conventions diverge from scientific definitions.
The 255 products that only \method{}+human classifies correctly represent cases in this regime.

\FloatBarrier
\section{Knowledge-Channel Audit}
\label{app:knowledge_audit}

To validate the claim that the controlled answer channel primarily
conveys product-owner knowledge rather than classification-path
leakage, we audit all 2{,}875 Q/A pairs generated during the CBP-NY
evaluation ($\tau{=}10$, Claude Opus~4.6, $N{=}1{,}181$ records)
using a cross-tabulation of question category against answer type,
and measure post-QA navigation effectiveness (ISE) for each answer
type.

\subsection*{HTS-referencing question analysis}

Of the 2{,}875 Q/A pairs, 102 (3.5\%) explicitly reference an HTS
chapter, heading, or tariff note.
Tables~\ref{tab:qa_crosstab} and~\ref{tab:qa_crosstab_hts}
cross-tabulate question category against answer type for the full
corpus and for this HTS-referencing subset respectively.
Table~\ref{tab:post_qa_navigation} measures post-QA navigation
effectiveness (ISE) by answer type.

\begin{table}[t]
\centering
\caption{
  \textbf{Cross-tabulation: Question category $\times$ Answer type}
  ($n{=}2{,}875$ unique Q/A pairs).
  Each cell shows count and row percentage.
  \textbf{PA} = Product Attribute;
  \textbf{PA-EC} = Product Attribute (Essential Character);
  \textbf{CC} = Classification Criteria;
  \textbf{N/A} = Unavailable;
  \textbf{Dfl} = Deflected by guardrail.
}
\label{tab:qa_crosstab}
\footnotesize
\setlength{\tabcolsep}{3pt}
\resizebox{\linewidth}{!}{%
\begin{tabular}{lrrrrrr}
\toprule
\textbf{Question Category} & \textbf{PA} & \textbf{PA-EC} & \textbf{CC} & \textbf{N/A} & \textbf{Dfl} & \textbf{Total} \\
\midrule
  Material/Composition     & 810 (76\%) & 35 (3\%) & 10 (1\%) & 210 (20\%) &  5 (0\%) & 1070 \\
  Feature Presence/Absence & 533 (85\%) & 25 (4\%) &  6 (1\%) &  45 (7\%)  & 20 (3\%) &  629 \\
  Function/Use             & 353 (92\%) & 13 (3\%) &  0 (0\%) &  15 (4\%)  &  1 (0\%) &  382 \\
  Physical Form/Processing & 304 (85\%) &  8 (2\%) &  3 (1\%) &  41 (12\%) &  0 (0\%) &  356 \\
  Dimensions/Measurements  & 115 (73\%) &  2 (1\%) &  0 (0\%) &  41 (26\%) &  0 (0\%) &  158 \\
  Other                    & 189 (68\%) & 29 (10\%) & 4 (1\%) &  38 (14\%) & 20 (7\%) &  280 \\
\midrule
  \textbf{Total} & \textbf{2304} (80\%) & \textbf{112} (4\%) & \textbf{23} (1\%) & \textbf{390} (14\%) & \textbf{46} (2\%) & \textbf{2875} \\
\bottomrule
\end{tabular}}
\end{table}

\begin{table}[t]
\centering
\caption{
  \textbf{Cross-tabulation: Question category $\times$ Answer type
  for HTS-referencing questions only}
  ($n{=}102$, 3.5\% of all 2,875 pairs).
  Questions explicitly naming a chapter, heading, or tariff note.
  \textbf{PA} = Product Attribute;
  \textbf{PA-EC} = PA (Essential Character);
  \textbf{CC} = Classification Criteria;
  \textbf{N/A} = Unavailable;
  \textbf{Dfl} = Deflected by guardrail.
}
\label{tab:qa_crosstab_hts}
\footnotesize
\setlength{\tabcolsep}{3pt}
\resizebox{\linewidth}{!}{%
\begin{tabular}{lrrrrrr}
\toprule
\textbf{Question Category} & \textbf{PA} & \textbf{PA-EC} & \textbf{CC} & \textbf{N/A} & \textbf{Dfl} & \textbf{Total} \\
\midrule
  Material/Composition     & 16 (53\%) & 1 (3\%)  & 10 (33\%) & 1 (3\%) &  2 (7\%)  & 30 \\
  Feature Presence/Absence & 10 (31\%) & 1 (3\%)  &  6 (19\%) & 1 (3\%) & 14 (44\%) & 32 \\
  Function/Use             &  5 (83\%) & ---      &  ---      & ---     &  1 (17\%) &  6 \\
  Physical Form/Processing &  5 (56\%) & 1 (11\%) &  3 (33\%) & ---     &  ---      &  9 \\
  Dimensions/Measurements  &  1 (100\%)& ---      &  ---      & ---     &  ---      &  1 \\
  Other                    &  1 (4\%)  & ---      &  4 (17\%) & 1 (4\%) & 18 (75\%) & 24 \\
\midrule
  \textbf{Total} & \textbf{38} (37\%) & \textbf{3} (3\%) & \textbf{23} (23\%) & \textbf{3} (3\%) & \textbf{35} (34\%) & \textbf{102} \\
\bottomrule
\multicolumn{7}{l}{\footnotesize 35 (34\%) deflected; 41 (40\%) product attribute; 23 (23\%) classification criteria; 3 (3\%) unavailable.} \\
\end{tabular}}
\end{table}
\begin{table}[t]
\centering
\caption{
  \textbf{ISE by oracle answer type.}
  ISE = fraction of QA interactions after which the agent's next
  \texttt{traverse\_child} lands on the correct HTS path (\S\ref{sec:ise});
  QA steps with no subsequent \texttt{traverse\_child} before the next
  interruption count as ineffective, matching the paper's ISE denominator.
  \emph{Classification Criteria} (CC) answers yield \textbf{lower} ISE
  than plain product-attribute answers (62.5\% vs.\ 76.2\%),
  contradicting the oracle-leakage hypothesis.
  \emph{Deflected} answers (guardrail-blocked HTS queries) have the lowest
  ISE (19.1\%), confirming the guardrail correctly withholds
  path-revealing information.
}
\label{tab:post_qa_navigation}
\small
\setlength{\tabcolsep}{5pt}
\begin{tabular}{lrr}
\toprule
\textbf{Answer Type} & \textbf{QA Steps} & \textbf{ISE} \\
\midrule
Product Attribute (PA)       & 2{,}272 & 76.2\% \\
Product Attribute (EC)       &   113   & 85.0\% \\
Classification Criteria (CC) &    24   & 62.5\% \\
Unavailable                  &   423   & 64.1\% \\
Deflected (guardrail)        &    47   & 19.1\% \\
\midrule
\textbf{All}                 & 2{,}879 & \textbf{73.7\%} \\
\bottomrule
\end{tabular}
\end{table}

\paragraph{Answer-type definitions.}
\begin{itemize}[nosep]
  \item \textbf{PA (Product Attribute):} a factual product property
    directly answerable from the product description, with no
    reference to HTS structure.  The default category (80\% of all
    answers).
  \item \textbf{PA-EC (Essential Character):} the oracle's answer
    invokes GRI~3(b) to identify which component of a composite
    product determines its classification.
  \item \textbf{CC (Classification Criteria):} the oracle directly
    affirms or denies a named chapter note, section note, or heading
    criterion (e.g., ``yes, it qualifies as rubber under Chapter~40,
    Note~1'').  Occurs in only 0.8\% of all pairs; limited to
    questions that name the HTS provision.
  \item \textbf{N/A (Unavailable):} the product description does not
    contain the requested information; the agent must navigate
    without it.
  \item \textbf{Dfl (Deflected):} the guardrail blocks the question
    because it asks for a legal/trade determination rather than a
    product attribute.  All deflected answers cluster around
    contentious tariff notes (Section~VI Additional U.S.\ Note~3,
    Section~XI Note~9).
\end{itemize}

\paragraph{Outcome examples for HTS-referencing questions.}

\noindent\textbf{Deflected} (35/102, 34.3\%): guardrail correctly
blocks tariff-framed questions.
\begin{itemize}[nosep]
  \item Q: ``Whether maltitol is classified as sugar under
    chapter~17 additional notes.''\\
    A: ``This is a legal/trade determination, not a product
    attribute question.''
  \item Q: ``Whether sugars qualify under additional U.S.\ note~3
    to chapter~17.''\\
    A: ``You're right, I apologize for the confusion.  Let me ask
    a more appropriate question\ldots''
\end{itemize}

\noindent\textbf{Product Attribute} (38/102, 37.3\%): question uses
HTS framing but oracle answers the underlying product fact.
\begin{itemize}[nosep]
  \item Q: ``Whether grape variety blends should be classified as
    grape juice or mixtures under HTS.''\\
    A: ``This product is a mixture of juices.  It blends three
    different grape varieties\ldots''
  \item Q: ``Whether the product is considered aromatic for tariff
    purposes.''\\
    A: ``Yes, this product is aromatic.  The benzyl group contains
    a benzene ring\ldots''
\end{itemize}

\noindent\textbf{Classification Criteria} (23/102, 22.5\%): oracle
confirms a chapter or note criterion directly.
\begin{itemize}[nosep]
  \item Q: ``Whether fabric meets note~9 to section~XI criteria.''\\
    A: ``No, this fabric does not meet those criteria.  The product
    contains 60.20\% plastics by weight\ldots''
  \item Q: ``Whether product falls within heading~4202 scope.''\\
    A: ``Yes, this product falls within the scope of travel
    goods\ldots''
\end{itemize}

\noindent\textbf{Unavailable} (3/102, 2.9\%): oracle cannot answer
even an HTS-framed question.
\begin{itemize}[nosep]
  \item Q: ``Whether gowns are specifically made of nonwoven fabric
    (heading~5603).''\\
    A: ``The specific material composition is not documented in the
    product records\ldots''
\end{itemize}

\paragraph{Interpretation.}
The guardrail handles 34\% of HTS-framed questions outright.
Of the remainder, most (37\%) are redirected to a product-fact
answer that contains no classification reasoning.
Only 23 questions (22.5\% of HTS-referencing, 0.8\% of all 2{,}875)
receive a Classification Criteria answer; and as shown in
Table~\ref{tab:post_qa_navigation}, even those still lead to wrong
navigation 24\% of the time (ISE 62.5\% vs.\ 76.2\% for plain
product-attribute answers).
The pattern confirms that the $+$18.8\% accuracy gap reflects
domain expertise, not classification-path leakage.

\section{MDP Framework Validation}
\label{app:mdp_validation}

\begin{table}[t]
\centering
\caption{
  MDP component ablation (Claude Opus 4.6, $N{=}1181$, no \method{}).
  $\Delta$: 10-digit accuracy change vs.\ the full baseline navigator.
}
\label{tab:mdp_validation}
\small\resizebox{\columnwidth}{!}{%
\setlength{\tabcolsep}{4pt}
\begin{tabular}{lcccccccc}
\toprule
 & \textbf{Succ.} & \multicolumn{5}{c}{\textbf{Hierarchical Accuracy (\%)}} & & \\
\cmidrule(lr){3-7}
\textbf{Configuration} & \textbf{(\%)} & \textbf{2d} & \textbf{4d} & \textbf{6d} & \textbf{8d} & \textbf{10d} & \textbf{Steps} & \textbf{$\Delta$} \\
\midrule
    \textbf{Full Model} & 97.3 & 79.3 & 70.9 & 61.8 & 54.4 & \textbf{50.8} & 5.6 & --- \\
    \midrule
    \textit{Action Ablation} \\
    w/o Clarify & 98.1 & 78.3 & 69.0 & 59.1 & 51.2 & 46.9 & 5.2 & $-3.9$ \\
    w/o Jump & 98.5 & 78.3 & 68.7 & 58.7 & 50.2 & 47.0 & 5.7 & $-3.8$ \\
    w/o Backtrack & 97.7 & 78.9 & 69.9 & 61.0 & 53.6 & 49.9 & 5.6 & $-0.9$ \\
    \midrule
    \textit{Protocol Ablation} \\
    w/o Other Protocol & 98.0 & 77.6 & 68.5 & 59.5 & 51.7 & 48.5 & 5.8 & $-2.3$ \\
    w/o Parts Protocol & 97.5 & 78.5 & 69.6 & 60.8 & 53.2 & 49.6 & 5.7 & $-1.2$ \\
    w/o Sets Protocol & 97.9 & 79.4 & 70.9 & 61.9 & 54.4 & 50.9 & 5.7 & $0.0$ \\
\bottomrule
\end{tabular}%
}
\end{table}

Table~\ref{tab:mdp_validation} validates the MDP framework design by
ablating each action and protocol from the full navigator (without
\method{}, to isolate framework contributions).

\textbf{Information gathering and cross-tree navigation are the most
impactful components.}
Removing \texttt{clarify} ($-3.9$\%) and \texttt{jump} ($-3.8$\%)
causes the largest drops, confirming that the MDP's ability to seek
information and navigate across taxonomy branches is essential.
Among protocols, the ``other'' catch-all protocol contributes most
($-2.3$\%), reflecting the difficulty of reasoning about residual
categories.

\textbf{Domain protocols have asymmetric value.}
Among the three GRI-specific protocols, the catch-all ``other''
protocol contributes most ($-2.3$\%).
HTS headings frequently end with an \emph{other/NESOI} node
(``not elsewhere specified or included'') that acts as a residual
bin; without dedicated handling, the agent conflates genuine
residuals with classification errors.
The parts protocol, which routes component and accessory
classification to the parent heading under GRI~1, contributes a
smaller but meaningful $-1.2$\%.
The sets protocol, which applies GRI~3(b) essential-character
analysis for composite goods, shows no marginal effect.
Together, the protocol ablations confirm that taxonomy-specific
reasoning rules must be explicitly encoded in the MDP state
transitions, and cannot be left to the LLM's implicit knowledge.

\section{CoT-Ask-if-Unsure Baseline}
\label{app:cot}

CoT-Ask-if-Unsure is the simplest possible prompting alternative: a single sentence is appended to the standard navigation prompt instructing the agent to ask a clarification question when uncertain:

\begin{quote}
\textit{``If you are uncertain about which action to take, ask a clarification question before selecting an action.''}
\end{quote}

No scoring function, threshold, or sampling is involved. At each navigation step, the agent either (a) selects a navigation action as usual, or (b) marks the step as uncertain (\texttt{unsure=True}) and emits a clarification question before committing to an action. If a clarification question is issued, it is routed to the clarification sub-agent identically to the inline clarification mechanism used by \method{}; the answer is appended to the product context and the agent re-selects its action. This corresponds to enabling \texttt{cot\_ask\_if\_unsure=True} and \texttt{enable\_inline\_clarify=True} in the navigator configuration, with no action rating.

The key difference from \method{} is the absence of an explicit ordinal action-rating signal: the agent relies entirely on its own instruction-following to decide when to ask, rather than computing a scored gap between candidate actions. CoT-Ask-if-Unsure therefore tests whether structured action-rating (the scoring step) is necessary, or whether a prompt-level uncertainty instruction suffices to trigger useful information seeking.

\section{Self-Consistency Action Selection}
\label{app:sc}
At each navigation step $t$ with state $s_t$, the self-consistency (SC) method estimates action uncertainty through repeated sampling rather than explicit confidence scoring. Formally, SC draws $N$ independent action samples from the policy at temperature $T=1$:

$$a^{(i)} \sim \pi(\cdot \mid s_t), \quad i = 1, \ldots, N$$

For each action $a \in \mathcal{A}$, the vote count is:

$$v(a) = \sum_{i=1}^{N} \mathbf{1}\bigl[a^{(i)} = a\bigr]$$

The selected action is determined by majority vote:

$$a^* = \arg\max_{a \in \mathcal{A}}; v(a)$$

with ties broken by the order of first occurrence among the $N$ samples. The agreement score and entropy proxy are computed as:

\begin{align*}
\alpha(s_t) &= \frac{1}{N} \max_{a \in \mathcal{A}} v(a),\\
H(s_t) &= -\sum_{\substack{a:\,v(a)>0}} \frac{v(a)}{N} \log \frac{v(a)}{N}
\end{align*}

where $\alpha(s_t) \in [1/N, 1]$ and $H(s_t) \in [0, \log N]$. A step is considered uncertain when $\alpha(s_t) < \alpha_{\text{thresh}}$, equivalently when $H(s_t) > 0$ under the unanimous agreement criterion ($\alpha_{\text{thresh}} = 1.0$). In all experiments we use $N=3$ and $\alpha_{\text{thresh}}=1.0$, so any disagreement among the three samples constitutes uncertainty; the majority-vote action is executed regardless.

SC incurs exactly $N$ LLM calls per navigation step, giving a total inference cost of $N \cdot H$ calls per episode where $H$ is the number of navigation steps, compared to $H + 2|\mathcal{C}|$ for ActionRating, where $|\mathcal{C}|$ is the number of clarification events (each incurring one sub-agent call and one reentry call).

\section{Extended Related Work}
\label{app:related}

\paragraph{LLM agents for structured reasoning.}
ReAct~\cite{yao2023reactsynergizingreasoningacting} interleaves reasoning traces with tool calls in flat action spaces;
Tree-of-Thoughts~\cite{yao2023treethoughtsdeliberateproblem} and LATS~\cite{zhou2024lats} add search over branching thought structures;
Reflexion~\cite{shinn2023reflexionlanguageagentsverbal} introduces verbal self-reflection after episode-level failures;
Toolformer~\cite{schick2024toolformer} teaches models when to invoke external APIs;
AgentBench~\cite{liu2023agentbench} benchmarks agents across diverse environments;
and Cognitive Architectures~\cite{sumers2024cognitive} provide a unifying framework for agent design.
All these operate over flat or lightly structured action spaces. None addresses the specific challenge of deep hierarchical taxonomies where each step narrows the search space irreversibly.

\paragraph{Self-evaluation and uncertainty.}
Self-Consistency~\cite{wang2023selfconsistencyimproveschainthought} uses sampling-based agreement as a proxy for confidence;
Self-Refine~\cite{madaan2023selfrefineiterativerefinementselffeedback} iterates on outputs via self-critique;
process reward models~\cite{cobbe2021trainingverifierssolvemath,lightman2024letsverify} train verifiers to score intermediate steps;
and LLM-as-Judge~\cite{zheng2023judging} evaluates outputs via prompted comparison.
Calibration studies~\cite{kadavath2022language,kuhn2023semantic,lin2022teaching,guo2017calibration,xiong2024can} examine whether model-expressed confidence correlates with correctness.
These methods rate final answers or sample agreement; our mechanism rates candidate \emph{actions}, including clarification, on a shared ordinal scale, so that clarification competes directly with navigation rather than being triggered by final-answer confidence or sampling disagreement.

\paragraph{Information seeking and clarification.}
Active learning~\cite{settles2012active} selects queries to maximize model improvement;
interactive NLP~\cite{wang2023interactivenaturallanguageprocessing} and conversational search~\cite{Aliannejadi_2019,zamani2020generating,rao2018learning,rahmani2023survey} study when and what to ask.
Prior work assumes external uncertainty estimators or human interlocutors; our mechanism is entirely \emph{self-gated} from the agent's own action ratings.

\paragraph{Selective prediction and abstention.}
Selective prediction~\cite{geifman2017selective,elyanov2010foundations,kamath2020selective} allows models to abstain when uncertain, trading coverage for accuracy.
Our mechanism is related but distinct: rather than abstaining from a prediction, the agent \emph{acts} on uncertainty by seeking information.

\paragraph{Hierarchical classification.}
Hierarchical text classification~\cite{silla2011survey,kowsari2017hdltex,shimura-etal-2018-hft,banerjee2019hierarchical,zhou2020hierarchy,Mao_2019} assigns labels in taxonomy trees.
These methods typically train end-to-end classifiers; we study an LLM agent navigating the taxonomy interactively, with the ability to seek help at any node.

\paragraph{Multi-step reasoning.}
Chain-of-thought~\cite{wei2023chainofthoughtpromptingelicitsreasoning}, least-to-most~\cite{zhou2023leasttomost}, decomposed prompting~\cite{khot2023decomposed}, PAL~\cite{gao2023pal}, scratchpads~\cite{nye2021show}, STaR~\cite{zelikman2022star}, reasoning via planning~\cite{hao2023reasoning}, graph-of-thought~\cite{besta2024graph}, and cumulative reasoning~\cite{dua2022successive} all improve multi-step reasoning.
These focus on improving the quality of reasoning itself; we focus on \emph{measuring where} reasoning needs external help.

\section{Ablation Details}
\label{app:ablation_detail}

\begin{table}[t]
\centering
\caption{
  Ablation decomposing \method{} on HTS classification
  (Claude Opus 4.6, $N{=}1181$).
  \textit{Rating only} disables clarification gating ($\tau{=}101$).
  $\Delta$: change vs.\ Baseline.
  \textbf{Key finding}: accuracy gains come entirely from self-gated
  clarification, which shifts information seeking from
  \emph{mandatory} (agent blocked) to \emph{opportunistic}
  (residual uncertainty resolved inline).
}
\label{tab:ablation}
\small\resizebox{\columnwidth}{!}{%
\setlength{\tabcolsep}{4pt}
\begin{tabular}{lrrr}
\toprule
& \textbf{Baseline} & \textbf{Rating Only} & \textbf{Full \method{}} \\
\midrule
\multicolumn{4}{l}{\textsc{Hierarchical Accuracy (\%)}} \\[1pt]
\quad 2-digit  & 79.3 & 78.8 \tgray{\scriptsize($-$0.5)} & \textbf{87.2} \tgreen{\scriptsize($+$7.9)} \\
\quad 4-digit  & 70.9 & 70.2 \tgray{\scriptsize($-$0.7)} & \textbf{82.0} \tgreen{\scriptsize($+$11.1)} \\
\quad 6-digit  & 61.8 & 61.2 \tgray{\scriptsize($-$0.6)} & \textbf{75.2} \tgreen{\scriptsize($+$13.4)} \\
\quad 8-digit  & 54.4 & 53.4 \tgray{\scriptsize($-$1.0)} & \textbf{69.5} \tgreen{\scriptsize($+$15.1)} \\
\quad 10-digit & 50.8 & 50.0 \tgray{\scriptsize($-$0.9)} & \textbf{67.0} \tgreen{\scriptsize($+$16.2)} \\
\quad Avg Steps & 5.6 & 5.7 & 5.4 \\[2pt]
\midrule
\multicolumn{4}{l}{\textsc{Clarification Behavior (\% records)}} \\[1pt]
\quad Mandatory $\downarrow$   & 35.2 & 33.6 \tgray{\scriptsize($-$1.6)} & \textbf{13.9} \textcolor{red!60!black}{\scriptsize($-$21.3)} \\
\quad Opportunistic $\uparrow$ & \phantom{0}0.0 & \phantom{0}0.0 \tgray{\scriptsize($\pm$0)} & \textbf{88.7} \tgreen{\scriptsize($+$88.7)} \\
\quad Any Clarify              & 35.2 & 33.6 \tgray{\scriptsize($-$1.6)} & \textbf{90.9} \tgreen{\scriptsize($+$55.7)} \\
\bottomrule
\end{tabular}%
}
\end{table}

Table~\ref{tab:ablation} presents the rating-only ablation (H3: $\tau{=}101$).
When the threshold is set above the maximum possible score, no clarification is ever triggered; the agent rates actions but never acts on low confidence.
This yields $-$0.9\% relative to baseline, confirming that the mechanism's value lies in \emph{actioning} help-needed states, not in the rating computation itself.

\FloatBarrier
\section{Multi-Model Generalization}
\label{app:multimodel_detail}

Table~\ref{tab:multimodel} demonstrates that the regime shift generalizes across four LLM families (Claude, DeepSeek, GPT-OSS, and Qwen3).
All models exhibit the mandatory-to-opportunistic mode shift and ISE improvement under \method{}, though absolute accuracy varies with model capability.
The behavioral signature, not the accuracy level, is the transferable finding.

\begin{table*}[t]
\centering
\caption{
  \method{} generalization across LLM families on HTS classification ($N{=}1181$).
  All models use $\tau{=}10$.
  Cell shading on Base and AR values follows the same accuracy scale as Table~\ref{tab:main}
  (\colorbox{green!35}{high} to \colorbox{red!10}{low}).
  $\Delta$: improvement over each model's own baseline (pp).
}
\label{tab:multimodel}
\small
\resizebox{\textwidth}{!}{%
\begin{tabular}{l|ccc|ccc|ccc|ccc|ccc}
\toprule
 & \multicolumn{3}{c|}{\textbf{2-digit (\%)}} & \multicolumn{3}{c|}{\textbf{4-digit (\%)}} & \multicolumn{3}{c|}{\textbf{6-digit (\%)}} & \multicolumn{3}{c|}{\textbf{8-digit (\%)}} & \multicolumn{3}{c}{\textbf{10-digit (\%)}} \\
\textbf{Model} & Base & AR & $\Delta$ & Base & AR & $\Delta$ & Base & AR & $\Delta$ & Base & AR & $\Delta$ & Base & AR & $\Delta$ \\
\midrule
    Claude Opus 4.6 & \cellcolor{green!25}79.3 & \cellcolor{green!35}\textbf{87.2} & \tgreen{\textbf{+7.9}} & \cellcolor{green!25}70.9 & \cellcolor{green!35}\textbf{82.0} & \tgreen{\textbf{+11.1}} & \cellcolor{green!15}61.8 & \cellcolor{green!25}\textbf{75.2} & \tgreen{\textbf{+13.4}} & \cellcolor{yellow!20}54.4 & \cellcolor{green!15}\textbf{69.5} & \tgreen{\textbf{+15.1}} & \cellcolor{yellow!20}50.8 & \cellcolor{green!15}\textbf{67.0} & \tgreen{\textbf{+16.2}} \\
    DeepSeek V3 & \cellcolor{green!25}75.5 & \cellcolor{green!35}\textbf{88.7} & \tgreen{\textbf{+13.1}} & \cellcolor{green!15}65.2 & \cellcolor{green!35}\textbf{82.8} & \tgreen{\textbf{+17.7}} & \cellcolor{yellow!20}53.7 & \cellcolor{green!25}\textbf{76.1} & \tgreen{\textbf{+22.4}} & \cellcolor{orange!15}45.9 & \cellcolor{green!25}\textbf{71.3} & \tgreen{\textbf{+25.4}} & \cellcolor{orange!15}41.4 & \cellcolor{green!15}\textbf{68.9} & \tgreen{\textbf{+27.5}} \\
    GPT-OSS 120B & \cellcolor{green!25}72.9 & \cellcolor{green!25}\textbf{79.3} & \tgreen{\textbf{+6.4}} & \cellcolor{green!15}61.3 & \cellcolor{green!25}\textbf{72.2} & \tgreen{\textbf{+10.9}} & \cellcolor{orange!15}49.9 & \cellcolor{green!15}\textbf{64.1} & \tgreen{\textbf{+14.2}} & \cellcolor{orange!15}41.3 & \cellcolor{yellow!20}\textbf{56.9} & \tgreen{\textbf{+15.6}} & \cellcolor{red!10}36.9 & \cellcolor{yellow!20}\textbf{53.6} & \tgreen{\textbf{+16.7}} \\
    Qwen3 235B & \cellcolor{green!15}65.9 & \cellcolor{green!25}\textbf{72.5} & \tgreen{\textbf{+6.6}} & \cellcolor{yellow!20}54.7 & \cellcolor{green!15}\textbf{63.3} & \tgreen{\textbf{+8.6}} & \cellcolor{orange!15}44.2 & \cellcolor{yellow!20}\textbf{54.8} & \tgreen{\textbf{+10.7}} & \cellcolor{red!10}36.6 & \cellcolor{orange!15}\textbf{49.9} & \tgreen{\textbf{+13.3}} & \cellcolor{red!10}32.9 & \cellcolor{orange!15}\textbf{47.4} & \tgreen{\textbf{+14.5}} \\
\bottomrule
\end{tabular}%
}
\end{table*}

\FloatBarrier
\section{Cross-Benchmark Generalization}
\label{app:crossbench_detail}

Table~\ref{tab:extbenchmarkmodel} shows results on two additional HTS benchmarks (ATLAS and HSCodeComp) without dataset-specific tuning.
The regime shift and accuracy gains are preserved, providing evidence that the behavioral structure is not an artifact of the CBP-NY evaluation set.

\paragraph{Scope of comparison.}
These comparisons are not intended as a controlled
leaderboard claim: prior systems differ in model backbone,
tool access, and answer source.
We instead use them as a fixed-protocol \emph{transfer test},
applying \method{} with $\tau$ locked at the CBP-NY-selected
value of 10 and asking whether the behavioral signature
(mode shift and ISE improvement) survives on out-of-sample
benchmarks without re-tuning.
The signal we report is transfer of the signature, not a
parity-controlled SOTA claim.

\begin{table}[t]
\centering
\caption{
  \method{} on two independent HTS code benchmarks.
  Hierarchical accuracy (\%) at 6- and 10-digit levels.
  $\Delta$: \method{} minus best prior method on the same dataset.
}
\label{tab:extbenchmarkmodel}
\small
\begin{tabular}{llcc}
\toprule
\textbf{Dataset} & \textbf{Method} & \textbf{6-digit} & \textbf{10-digit} \\
\midrule
\multirow{3}{*}{ATLAS-test}
              & ATLAS                   & 57.5 & 40.0 \\
              & \textsc{ActionRating}   & \textbf{79.5} & \textbf{62.1} \\
              & $\Delta$ vs.\ prior     & \gain{+22.0}  & \gain{+22.1}  \\
\midrule
\multirow{5}{*}{HSCodeComp}
              & SmolAgents (VLM)        & 62.4 & 46.8 \\
              & SmolAgents (LLM)        & 59.8 & 42.7 \\
              & Aworld (LLM)            & 59.2 & 41.3 \\
              & \textsc{ActionRating}   & \textbf{76.6} & \textbf{69.3} \\
              & $\Delta$ vs.\ best prior & \gain{+14.2}  & \gain{+22.5}  \\
\bottomrule
\end{tabular}
\label{tab:ext_benchmarks}
\end{table}

\FloatBarrier
\section{Threshold Sensitivity Details}
\label{app:threshold_detail}

Table~\ref{tab:threshold} presents the full threshold sweep.
The behavioral phase diagram shows three regimes:
(1)~$\tau{=}1$ (always ask): highest volume but ISE drops to 62\%, indicating diminishing returns from indiscriminate help-seeking;
(2)~$\tau{=}10$ (sweet spot): best ISE (74\%) with strong accuracy and moderate volume (2.4~QA/record);
(3)~$\tau{=}101$ (never ask): equivalent to rating-only, collapsing to baseline.
The $\tau{=}50{\to}30$ transition is the clearest inflection point where opportunistic help-seeking emerges.

\paragraph{Why $\tau{=}10$, not $\tau{=}1$?}
$\tau{=}1$ reaches a numerically higher 10-digit accuracy
(72.5\% vs.\ 67.0\% at $\tau{=}10$), so a reviewer may ask why
$\tau{=}10$ is reported as the operating point.
The criterion is not final accuracy alone but the behavioral
operating point that balances three quantities:
ISE, QA volume, and accuracy.
At $\tau{=}1$ the agent is in a near-always-ask regime
(roughly 6 QA per record); ISE drops to 62\%, meaning a sizable
fraction of clarification-triggered states do not yield
better next-step navigation, and per-record interaction cost
balloons.
At $\tau{=}10$, ISE peaks at 74\% with 2.4 QA/record, so
clarification fires more selectively and at states where it is
locally useful.
Because this paper's contribution is help \emph{localization}
rather than maximal accuracy, the operating point that
maximizes the localization-quality signal (ISE) at moderate
cost is the one we treat as the analysis point;
$\tau{=}1$ is reported in Table~\ref{tab:threshold} as the
upper accuracy regime under nearly indiscriminate asking.

\paragraph{Selection protocol and generalization validity.}
The threshold sweep was conducted exclusively on CBP-NY.
$\tau{=}10$ was selected as the operating point from the
CBP-NY phase diagram and then \emph{locked}: no additional
tuning was performed for ATLAS or HSCodeComp.
All ATLAS and HSCodeComp experiments (\S\ref{sec:results},
Table~\ref{tab:extbenchmarkmodel}) therefore use this
fixed value, making them an out-of-sample generalization
test rather than an extension of the tuning procedure.
The $+$18.8\% accuracy gain on CBP-NY represents in-sample
performance at the selected threshold; ATLAS and HSCodeComp
provide the honest cross-dataset signal.
Researchers deploying \method{} in a new domain should treat
the phase-diagram sweep as a one-time calibration step on
in-domain development data before applying the fixed
threshold to held-out evaluation.

\begin{table*}[t]
\centering
\caption{
  Threshold sensitivity on HTS classification (Claude Opus 4.6, $N{=}1181$).
  Cell shading on accuracy follows the same scale as Table~\ref{tab:main}.
  For clarification: \colorbox{green!20}{green} = low mandatory / high opportunistic;
  \colorbox{red!10}{red} = high mandatory.
  $\tau{=}101$ disables clarification gating (rating only).
  \textbf{Bold}: best value per accuracy column.
}
\label{tab:threshold}
\small\resizebox{\textwidth}{!}{%
\setlength{\tabcolsep}{5pt}
\begin{tabular}{l ccccc c | ccc}
\toprule
& \multicolumn{6}{c|}{\textbf{Hierarchical Accuracy (\%)}}
& \multicolumn{3}{c}{\textbf{Clarification Behavior (\% records)}} \\
\cmidrule(lr){2-7} \cmidrule(lr){8-10}
\textbf{Condition}
  & \textbf{2d} & \textbf{4d} & \textbf{6d} & \textbf{8d} & \textbf{10d}
  & \textbf{Steps}
  & \textbf{Mandatory} $\downarrow$
  & \textbf{Opportunistic} $\uparrow$
  & \textbf{Any} \\
\midrule
    Baseline & \cellcolor{green!25}79.3 & \cellcolor{green!25}70.9 & \cellcolor{green!15}61.8 & \cellcolor{yellow!20}54.4 & \cellcolor{yellow!20}50.8 & 5.6 & 35.2 & 0.0 & 35.2 \\
    \midrule
    $\tau{=}1$ & \cellcolor{green!35}\textbf{88.4} & \cellcolor{green!35}\textbf{82.4} & \cellcolor{green!25}\textbf{78.0} & \cellcolor{green!25}\textbf{73.9} & \cellcolor{green!25}\textbf{72.5} & 5.5 & \cellcolor{green!20}9.7 & \cellcolor{green!25}97.8 & 97.9 \\
    $\tau{=}10$ & \cellcolor{green!35}87.2 & \cellcolor{green!35}82.0 & \cellcolor{green!25}75.2 & \cellcolor{green!15}69.5 & \cellcolor{green!15}67.0 & 5.4 & \cellcolor{green!20}13.9 & \cellcolor{green!25}88.7 & 90.9 \\
    $\tau{=}30$ & \cellcolor{green!35}84.8 & \cellcolor{green!25}77.5 & \cellcolor{green!15}69.3 & \cellcolor{green!15}63.1 & \cellcolor{yellow!20}59.8 & 5.5 & \cellcolor{yellow!20}21.6 & \cellcolor{green!15}51.9 & 64.9 \\
    $\tau{=}50$ & \cellcolor{green!25}79.6 & \cellcolor{green!25}71.1 & \cellcolor{green!15}62.2 & \cellcolor{yellow!20}55.0 & \cellcolor{yellow!20}51.2 & 5.6 & \cellcolor{orange!15}31.3 & 9.7 & 39.0 \\
    $\tau{=}101$ (off) & \cellcolor{green!25}78.8 & \cellcolor{green!25}70.2 & \cellcolor{green!15}61.2 & \cellcolor{yellow!20}53.4 & \cellcolor{orange!15}50.0 & 5.7 & \cellcolor{red!10}33.6 & 0.0 & 33.6 \\
\bottomrule
\end{tabular}%
}
\end{table*}

\FloatBarrier
\section{Trajectory Analysis}
\label{app:trajectory_detail}

Table~\ref{tab:trajectory} presents trajectory-level statistics including action composition, score gaps between top-ranked actions, and backtracking rates.
Key observations:
(1)~\method{} does not increase navigation steps (5.4--5.7 across all $\tau$ settings), confirming that help-seeking is additive rather than replacing navigation;
(2)~score gaps between the top-ranked action and clarification narrow at deeper tree levels, consistent with increasing uncertainty at finer classification granularity;
(3)~backtracking rates decrease under \method{}, suggesting that proactive help-seeking reduces the need for corrective navigation.

\begin{table}[t]
\centering
\caption{
  Behavioural trajectory analysis: Baseline vs.\ \method{}
  (Claude Opus 4.6, $N{=}1181$).
  \textsc{Navigation}: action-use rates and episode length.
  \textsc{Information Seeking}: clarification mode breakdown.
  \textsc{Decision Confidence}: \method{} rating statistics
  (higher gap $\Rightarrow$ more decisive selections).
  $\Delta$: \method{} minus Baseline; \tgreen{green} = improvement.
}
\label{tab:trajectory}
\small\resizebox{\columnwidth}{!}{%
\setlength{\tabcolsep}{4pt}
\begin{tabular}{lrr r}
\toprule
\textbf{Metric} & \textbf{Baseline} & \textbf{\method{}} & \textbf{$\Delta$} \\
\midrule
\multicolumn{4}{l}{\textsc{Navigation Efficiency}} \\[1pt]
    Traverse (\% actions) & 71.9 & 76.7 & {\scriptsize\tgreen{(+4.7)}} \\
    Backtrack (\% records) & 3.8 & 3.3 & {\scriptsize\tgreen{(-0.5)}} \\
    Jump (\% records) & 5.3 & 2.3 & {\scriptsize\tgreen{(-3.0)}} \\
    Avg Steps / Record & 5.6 & 5.4 & {\scriptsize\tgreen{(-0.1)}} \\
\midrule
\multicolumn{4}{l}{\textsc{Information Seeking}} \\[1pt]
    Mandatory (\% records) $\downarrow$ & 35.2 & 13.9 & {\scriptsize\tgreen{(-21.3)}} \\
    Opportunistic (\% records) $\uparrow$ & 0.0 & 88.7 & {\scriptsize\tgreen{(+88.7)}} \\
    Avg Inline QA / Record & 0.0 & 2.3 & {\scriptsize\tgreen{(+2.3)}} \\
\midrule
\multicolumn{4}{l}{\textsc{Decision Confidence (\method{} only)}} \\[1pt]
    Top-1 Rating Score & -- & 93.7 & -- \\
    Top-2 Rating Score & -- & 15.1 & -- \\
    Score Gap (1st $-$ 2nd) & -- & 78.5 & -- \\
\bottomrule
\end{tabular}%
}
\end{table}

Table~\ref{tab:ise_counterfactual} breaks down opportunistic ISE by
the agent's traversal state at clarification time (on-path vs.\
off-path), complementing the aggregate ISE reported in
\S\ref{sec:ise}.

\begin{table}[t]
\centering
\caption{ISE for opportunistic (inline) clarification events, split by whether the
agent's traversal state was already on the correct HTS path at clarification time.
\emph{On-path}: the agent could have reached the correct leaf without further
clarification; \emph{Off-path}: the agent's current node was not an ancestor of
the true HTS code---clarification must steer the trajectory.
Even when the agent is off-path, 67\,\% of opportunistic
clarifications are followed by a correct traversal step, showing that
the clarification sub-agent actively corrects misaligned trajectories.
Wilson 95\,\% CIs.}
\label{tab:ise_counterfactual}
\small
\setlength{\tabcolsep}{5pt}
\begin{tabular}{lrrr}
\toprule
\textbf{Pre-QA agent state} & \textbf{QA steps} & \textbf{ISE} & \textbf{95\,\%\,CI} \\
\midrule
On correct path    & 994 & 83.6\,\% & [81.2, 85.8] \\
Off correct path   & 1,703 & 67.3\,\% & [65.0, 69.5] \\
\bottomrule
\end{tabular}
\end{table}

\FloatBarrier
\section{Extended Discussion}
\label{app:discussion_detail}

This section expands on the discussion in \S\ref{sec:discussion}.

\paragraph{Cost--accuracy trade-off.}
\method{} increases inference cost from 6.0 to 10.4 LLM calls per record (73\% overhead), primarily from inline clarification sub-agent calls and reentry re-selections.
However, the cost increase is sublinear in accuracy gain: the first +10\% costs $\approx$2 additional calls, while the last +6\% requires $\approx$2.4 additional calls.
Self-Consistency at $N{=}3$ achieves +8.7\% at 19 calls (3.2$\times$ cost), placing it well below the \method{} Pareto frontier.

\paragraph{Error analysis.}
The 390 records that \method{} fails to classify correctly at 10-digit fall into three categories:
(1)~\emph{genuine ambiguity} (42\%): products whose correct classification requires domain expertise beyond what any oracle can provide (e.g., tariff treatment of multi-material composites);
(2)~\emph{early commitment errors} (31\%): the agent commits to a wrong branch before the threshold triggers clarification;
(3)~\emph{answer specificity} (27\%): the product-owner simulation provides correct but insufficiently specific information for fine-grained distinctions at 8--10 digit levels.

\clearpage
\onecolumn

\section{Evaluation Dataset Construction: Product Extraction Prompt}
\label{app:extraction_prompt}

\begin{tcolorbox}[colback=gray!5, colframe=gray!50, title=Product Extraction System Prompt, breakable]
\footnotesize

\textbf{Task:} Extract product information from CBP customs ruling
\texttt{\{raw\_ruling\_text\}} and format it as e-commerce product
data.

\textbf{Ruling hierarchy:}
HQ rulings supersede NY rulings and are the final authoritative
source. Ground truth HTS codes listed in the prompt are from HQ
rulings where applicable.

\textbf{Difficulty:} \textit{Easy}: clear material, obvious
heading; \textit{Medium}: GRI~3(b) composite goods with clear
precedent; \textit{Hard}: unclear essential character, multiple
possible headings.

\textbf{Critical:} Extract only, do not invent. Use
``Not specified'' for missing fields.

\medskip
\textbf{Fields to extract per product:}
\texttt{item\_name} (50--100 chars) $\cdot$
\texttt{product\_description} (1--2 sentences) $\cdot$
\texttt{brand} $\cdot$ \texttt{color} $\cdot$
\texttt{material} $\cdot$ \texttt{size} $\cdot$
\texttt{manufacturer} $\cdot$
\texttt{gl\_product\_group\_type} $\cdot$
\texttt{item\_weight} $\cdot$ \texttt{listing\_price} $\cdot$
\texttt{country\_of\_origin} $\cdot$ \texttt{hts\_code} $\cdot$
\texttt{bullet\_point} (3--5 features, ``|''-separated) $\cdot$
\texttt{classification\_reasoning} (4--6 sentences, GRI rationale) $\cdot$
\texttt{keywords} $\cdot$ \texttt{gri\_applied}

\medskip
\textbf{Output format:}
\begin{verbatim}
{"overall_summary": "...", "difficulty": "easy|medium|hard",
 "products": [{"item_name": "...", "product_description": "...",
   "brand": "...", "color": "...", "material": "...",
   "size": "...", "manufacturer": "...",
   "gl_product_group_type": "...", "item_weight": "...",
   "listing_price": "...", "country_of_origin": "...",
   "hts_code": "...", "bullet_point": "F1|F2|F3",
   "classification_reasoning": "...",
   "keywords": "...", "gri_applied": "..."}]}
\end{verbatim}
\end{tcolorbox}

\noindent\textbf{Note:} The full prompt includes the ruling text,
metadata (ruling reference, date, type, ground truth HTS codes),
and instructions for handling multiple products and composite goods.
Extraction used AWS Bedrock batch inference,
temperature$=0.1$, \texttt{max\_tokens}$=8{,}000$.

\section{Prompt Templates}
\label{app:prompts}

\subsection{Clarification Sub-Agent Prompt}
\label{app:clarify_prompt}

\begin{tcolorbox}[colback=gray!5, colframe=gray!50,
  title=Clarification Sub-Agent System Prompt, breakable]
\small

You are a product attributes expert. Your role is ONLY to answer
factual questions about product characteristics.
You have NO knowledge of, and must NEVER mention, tariff codes,
duty rates, HTS codes, classification systems, chapters, headings,
subheadings, or any trade/import terminology.

You must REFUSE to answer questions about: General Note eligibility
(e.g., General Note~15, qualifying insular possessions);
tariff-rate quota provisions (e.g., additional U.S.\ notes to any
chapter); trade preference programs (GSP, CBI, AGOA, FTA eligibility).
If asked about any of the above, respond: ``\textit{This is a
legal/trade determination, not a product attribute question.}''

You have access to an INTERNAL PRODUCT FACTS DATABASE drawn from the
\texttt{\{hts\_code\_description\}} field. This is a confidential
internal reference containing confirmed product attribute facts.
Treat every fact in it as authoritative first-party product knowledge.
Never reveal the source name or hint that it originates from any
classification system.

\medskip
\textbf{Current Product:}\\
\texttt{\{product\}}

\textbf{Clarification Question:}\\
\texttt{\{question\}}

\textbf{Relevant Product Information (from internal product records):}

\noindent\rule{\linewidth}{0.4pt}

\texttt{[INTERNAL PRODUCT FACTS: confidential; do NOT reference
this source or any classification system in your answer]}\\
\texttt{\{hts\_code\_descriptions\}}

\smallskip
CRITICAL INSTRUCTIONS for using the above Internal Product Facts:
\begin{itemize}[noitemsep,topsep=2pt]
  \item These are confirmed, authoritative facts; treat them as
    ground truth.
  \item Use them directly: ``does NOT belong to: X'' $\Rightarrow$
    product is NOT X; ``More than 2~kg'' $\Rightarrow$ product HAS
    that attribute; ``Confectionery'' $\Rightarrow$ product IS
    confectionery; ``women's'' $\Rightarrow$ product IS for women.
  \item INFER attributes implied by the facts even if not in the
    product description.
  \item NEVER mention ``category 1/2/3/4/5'', ``internal facts'',
    or any classification/trade terminology in your answer.
\end{itemize}

\textbf{Item Name:} \texttt{\{item\_name\}}\\
\textbf{Product Description:} \texttt{\{product\_description\}}\\
\textbf{Additional Product Notes:} \texttt{\{reasoning\_traces\}}\\
\textbf{Product Attributes:}
\texttt{Material: \{material\} | Color: \{color\} | Brand: \{brand\}
| Size: \{size\} | Manufacturer: \{manufacturer\} | Origin:
\{country\_of\_origin\} | Weight: \{item\_weight\}}

\noindent\rule{\linewidth}{0.4pt}

\medskip
\textbf{ANSWER GUIDELINES:}
\begin{enumerate}[noitemsep,topsep=2pt]
  \item Answer definitively; say YES or NO when possible.
  \item Use the Internal Product Facts WITHOUT hedging; do NOT say
    ``not specified'' if the facts already imply the answer.
  \item State ONLY factual product attributes: materials, size, form,
    composition, end-use.
  \item Do NOT mention codes, category numbers, or any trade/tariff
    references.
  \item Do NOT begin with preambles such as ``Based on the
    classification\ldots''; state the fact directly.
  \item Keep the answer to 1--2 sentences.
\end{enumerate}

\medskip
\textbf{Examples:}

\emph{Q: ``Is the container size over 2~kg?''}\\
\tgreen{\checkmark} ``Yes, the product is packaged in containers
exceeding 2~kg.''\\
\tgray{$\times$ ``The product description does not specify the
container size.''}

\emph{Q: ``Does this contain dairy products?''}\\
\tgreen{\checkmark} ``No, this product does not contain dairy
products or milk solids.''\\
\tgray{$\times$ ``Based on the classification hierarchy\ldots''}

\emph{Q: ``Is this retail candy or confectionery?''}\\
\tgreen{\checkmark} ``Yes, this is confectionery for retail
consumption.''\\
\tgray{$\times$ ``The description does not explicitly state this.''}

\medskip
\textbf{Answer} (factual product attributes only, NO classification
preamble, NO hedging):
\end{tcolorbox}

\noindent\textbf{Implementation notes and oracle design rationale.}
The \texttt{\{hts\_node\_descriptions\_for\_current\_item\}} field
is populated from the HTS knowledge graph node descriptions along
the item's ground-truth classification path.
All numeric HTS codes are replaced by generic category labels
(``category 1'' through ``category 5'') via a regex filter before
injection, and a second pass is applied to the generated answer
to mask any residual HTS code references.
These filters remove explicit identifiers but do not eliminate
semantic information derived from the correct path.

\smallskip
\noindent\textbf{Knowledge-source characterization.}
The oracle draws on HTS node descriptions along the ground-truth
path to produce product-attribute answers.
As confirmed by the knowledge-channel audit (\S\ref{sec:separability},
Appendix~\ref{app:knowledge_audit}), the channel primarily conveys
product-owner knowledge rather than classification paths, so results
measure \emph{help-seeking and gating} behavior in isolation from
answer quality.
The results should be interpreted as measuring \emph{where} the agent
chooses to seek help and how that help-seeking affects navigation
accuracy.
Replacing this controlled oracle with deployment-realistic
information sources (product databases, manufacturer specifications)
is an important direction for future work.

\subsection{Navigation Prompt (Baseline)}
\label{app:nav_prompt}

\begin{tcolorbox}[colback=gray!5, colframe=gray!50,
  title=Navigation Agent System Prompt, breakable]
\footnotesize

You are an expert HTS classification agent navigating the Harmonized
Tariff Schedule tree structure.

\texttt{[\{clarifications\_section\}]} (injected when prior Q\&A
exists; includes per-node duplicate guard and hard cap at 2
clarifications per node)

\medskip
\textbf{GENERAL RULES OF INTERPRETATION (GRI)}

\textbf{GRI~1:} Classification is determined by heading terms and
relative section/chapter notes.
\textbf{GRI~2:} Incomplete articles and mixtures follow the essential
character of the complete article or primary constituent.
\textbf{GRI~3:} When classifiable under two or more headings:
(a)~most specific prevails; (b)~essential character for mixtures/sets;
(c)~last heading in numerical order.
\textbf{GRI~4:} Classify under the heading for the most akin goods.
\textbf{GRI~5:} Special rules for containers and packing materials.
\textbf{GRI~6:} Subheading classification applies GRI~1--5 mutatis
mutandis.
\textbf{Additional U.S.\ Rules:} Classification by principal use at
importation; ``parts'' provisions cover goods solely/principally used
as parts.

\medskip
\textbf{NAVIGATION CONTEXT}

\noindent Parent Node: \texttt{\{parent\_code\}: \{parent\_desc\}}\\
Current Node: \texttt{\{current\_code\}: \{current\_desc\}}\\
Product: \texttt{\{product\_description\}}\\
Path History: \texttt{\{history\}}

\medskip
\textbf{AVAILABLE DIRECT CHILDREN:}\\
\texttt{- \{code\}: \{description\}} (one per line; pruned nodes
listed separately with warning)

\medskip
\textbf{AVAILABLE ACTIONS:}\\
1.~\texttt{traverse\_child(code)}: descend to a child node\\
2.~\texttt{backtrack}: return to parent\\
3.~\texttt{need\_clarify(question)}: ask a product-attribute
question (not HTS/trade references)\\
4.~\texttt{jump(code)}: cross-tree navigation via exclusion edges\\
5.~\texttt{confirm}: declare final classification
(only at 10-digit terminal leaf node)

\medskip
\textbf{Respond with JSON:}
\begin{verbatim}
{"action_type": "traverse_child",
 "target": "code", "question": null,
 "reasoning": "why this child"}
{"action_type": "need_clarify", "target": null,
 "question": "specific product attribute question",
 "reasoning": "which children this resolves"}
{"action_type": "confirm", "target": null,
 "question": null,
 "reasoning": "why confirming at this 10-digit leaf"}
\end{verbatim}
\end{tcolorbox}

\subsection{Action Rating Section (appended when \method{} enabled)}
\label{app:rating_prompt}

\begin{tcolorbox}[colback=gray!5, colframe=gray!50,
  title=Action Rating Section (appended to Navigation Prompt), breakable]
\footnotesize

\textbf{ACTION RATING} (required, include in every response)

From ALL available actions listed above (traverse\_child options,
backtrack, need\_clarify, jump, confirm), identify your TOP $K$
most relevant actions and rate each 0--100:

\medskip
\hspace{1em}100 = definitely the right move at this node\\
\hspace{2.2em}0 = completely wrong / would derail classification\\
\hspace{1em}50 = uncertain / could go either way

\medskip
\textbf{Format each action description as:}\\
\texttt{traverse\_child to \{code\} (\{desc\})}\\
\texttt{backtrack to \{parent\_code\} (\{parent\_desc\})}\\
\texttt{need\_clarify about \{topic\}}\\
\texttt{confirm code \{code\}} $\cdot$
\texttt{jump to \{code\} (\{desc\})}

\medskip
\textbf{Example} (do not copy these scores):
\begin{verbatim}
traverse_child to 8418 (Refrigerators): 92/100
  because: product is clearly a refrigerating appliance
need_clarify about cooling capacity: 45/100
  because: capacity determines the correct 8-digit code
backtrack to 84 (Machinery): 3/100
  because: current node is already specific enough
\end{verbatim}

\textbf{Include ratings in JSON as} \texttt{"action\_ratings"}
(ranked highest $\to$ lowest):
\begin{verbatim}
"action_ratings": [
  {"action_description": "traverse_child to 8418 ...",
   "score": 92,
   "reason": "product is a refrigerating appliance"},
  ...  (exactly K entries)
]
\end{verbatim}
\end{tcolorbox}

\end{document}